\pdfoutput=1

\documentclass[twoside]{article}

\usepackage[accepted]{aistats2021}

%
%


\setlength{\pdfpageheight}{11in}
\setlength{\pdfpagewidth}{8.5in}

\usepackage[round]{natbib}




\usepackage{url}   
\usepackage{bm}
\usepackage{amsmath}
\usepackage{amsthm}
\usepackage{amsfonts}       
\usepackage{algorithm}
\usepackage{algorithmic}
\usepackage{subcaption}
\usepackage{graphicx}
\graphicspath{ {img/} }

\usepackage{hyperref}

\newtheorem{lemma}{Lemma}
\newtheorem{remark}{Remark}

\DeclareMathOperator*{\argmin}{arg\,min}



\newcommand{\RR}{\mathbb{R}}
\newcommand{\NN}{\mathbb{N}}


\newcommand{\cA}{{\mathcal A}}

\newcommand{\cH}{{\mathcal H}}

\newcommand{\cT}{{\mathcal T}}

\newcommand{\cV}{{\mathcal V}}

\newcommand{\cY}{{\mathcal Y}}
\newcommand{\cZ}{{\mathcal Z}}

\newcommand{\veps}{\varepsilon}


\begin{document}

%

%

\twocolumn[

\aistatstitle{Parametric Programming Approach for  More Powerful and General Lasso Selective Inference}

\aistatsauthor{ Vo Nguyen Le Duy \And Ichiro Takeuchi}

\aistatsaddress{ 
Nagoya Institute of Technology and RIKEN \\ duy.mllab.nit@gmail.com \And  
Nagoya Institute of Technology and RIKEN \\ takeuchi.ichiro@nitech.ac.jp} 
]

\begin{abstract}
Selective Inference (SI) has been actively studied in the past few years for conducting inference on the features of linear models that are adaptively selected by feature selection methods such as Lasso. 
The basic idea of SI is to make inference conditional on the selection event.
Unfortunately, the main limitation of the original SI approach for Lasso is that the inference is conducted not only conditional on the selected features but also on their signs --- this leads to loss of power because of over-conditioning.
Although this limitation can be circumvented by considering the union of such selection events for all possible combinations of signs, this is only feasible when the number of selected features is sufficiently small. 
To address this computational bottleneck, we propose a parametric programming-based method that can conduct SI without conditioning on signs even when we have thousands of active features. 
The main idea is to compute the continuum path of Lasso solutions in the direction of the selected test statistic, and identify the subset of the data space corresponding to the feature selection event by following the solution path.
The proposed parametric programming-based method not only avoids the aforementioned computational bottleneck but also improves the performance and practicality of SI for Lasso in various respects. 
We conduct several experiments to demonstrate the effectiveness and efficiency of our proposed method.
\end{abstract}

 \section{Introduction}

Reliable machine learning (ML), which is the problem of assessing the reliability of data-driven knowledge obtained by ML algorithms, is one of the most important issues in the ML community.
Among various approaches for reliable ML, \emph{selective inference (SI)} 
has been recognized as a new promising approach for assessing the statistical reliability of data-driven hypotheses selected by complex data analysis algorithms.

SI was first introduced as a statistical inference tool for the features selected by Lasso~\citep{tibshirani1996regression}. 
Although various properties of Lasso have been extensively studied in the past decades (see, e.g., \cite{hastie2015statistical}), exact statistical inference such as computing $p$-values or confidence intervals for \emph{adaptively} selected features by Lasso has only recently begun to be actively studied in the context of SI~\citep{lee2016exact, fithian2014optimal, liu2018more}. 

The main idea of SI is to make inference for the selected features \emph{conditional on the selection event}, leading to exact \emph{valid} inference on adaptively selected features by Lasso is possible in the sense that $p$-values for proper false positive rate control or confidence intervals with proper coverage guarantees can be obtained. 
After the seminal work \citep{lee2016exact}, \emph{conditional inference}-based SI has been actively studied and applied to various problems~\citep{bachoc2014valid, fithian2014optimal, fithian2015selective, choi2017selecting, tian2018selective, chen2019valid, hyun2018post, bachoc2018post, charkhi2018asymptotic, loftus2014significance, loftus2015selective, panigrahi2016bayesian, tibshirani2016exact, yang2016selective, suzumura2017selective, tanizaki2020computing, duy2020computing, duy2020quantifying, sugiyama2020more}.

\textbf{Existing works and their drawbacks.}
Let $\cA$ be a random variable indicating the set of the selected features by applying Lasso on any random data sample and $\bm s$ be their signs.
Then in the seminal work 
\citep{lee2016exact}, the authors showed that the selection event $\{\cA = \cA_{\rm obs}, \bm s = {\bm s}_{\rm obs}\}$ is characterized as a polytope in the data space, where $\cA_{\rm obs}$ and ${\bm s}_{\rm obs}$ are the corresponding observations (see \S2 for detailed setup), leading to the sampling distribution of the test-statistic in the form of a \emph{truncated Normal distribution}.
However, it is well-known that conditioning on the signs leads to low statistical power because of \emph{over-conditioning},
which is widely recognized as a major drawback of the current Lasso SI approach.

%
\begin{figure*}[!t]
\centering
\includegraphics[width=.7\linewidth]{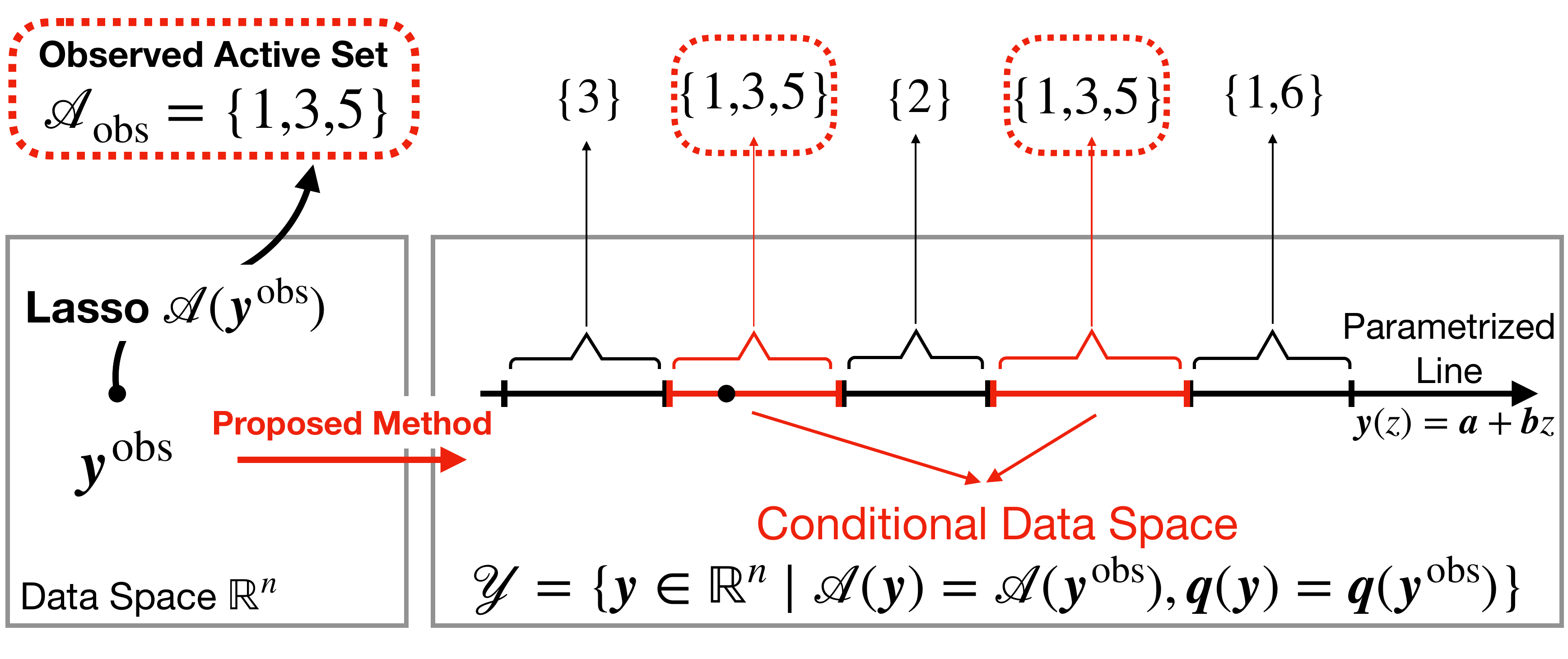}  
\caption{
Schematic illustration of the proposed method.
By applying Lasso on the observed data $\bm y^{\rm obs}$, we obtain the observed active set $\cA_{\rm obs}$.
The statistical inference for each selected feature is conducted conditional on the subspace $\cY$ whose data has the same active set as $\bm y^{\rm obs}$.
We introduce a parametric programing method for characterizing the conditional data space $\cY$ by searching on the parametrized line.
}
\label{fig:fig_intro}
\end{figure*}
\cite{lee2016exact} also discussed the solution to overcome the drawback by conducting conditional inferences without sign event $\{\cA = \cA_{\rm obs}\}$, which can be characterized by $2^{|\cA_{\rm obs}|}$ polytopes.
If the number of selected features 
$|\cA_{\rm obs}|$ is moderate (e.g., up to 15), it is feasible to consider 
all affine constraints of all these 
$2^{|\cA_{\rm obs}|}$ polytopes. 
However, if $|\cA_{\rm obs}|$ is large, it is \emph{infeasible} to enumerate the whole affine constraints for exponentially increasing number of polytopes. 
%
%

Recently, \cite{liu2018more} have proposed two approaches, in which the problem settings are different from \cite{lee2016exact}, to improve the power.
However, in their first approach, it is only applicable when the number of features $p$ is smaller than the number of instances $n$. 
%
%
%
%
In the second approach, they still consider an exponentially large number of all possible sign vectors.
%
This paper is motivated by Section 6 of \cite{liu2018more} in which they provide a recipe for constructing more powerful conditional SI methods.
In the other direction, \cite{tian2018selective} and \cite{terada2019selective} proposed methods using randomization. 
A drawback of these randomization-based approaches 
including simple data-splitting approach 
is that further randomness is added in both feature selection and inference stages.

Many machine learning tasks involve careful tuning of a \emph{regularization parameter} $\lambda$ that controls the balance between an empirical loss term and a regularization term, e.g., 
commonly by 
cross-validation (CV).
However, most of the current Lasso SI methods assume a pre-specified $\lambda$ and ignore the fact that $\lambda$ is selected based on the data  because the selection event of cross-validation i
is difficult to characterize.
%
%
%
\cite{loftus2015selective} and \cite{markovic2017unifying} proposed solutions to incorporate CV event.
However, the former requires additional conditioning on all intermediate models which leads to loss of power and the latter considers a randomization version of CV instead of the vanilla CV.
%
%

\paragraph{Contribution.} Our contributions are as follows:

$\bullet$ In this paper, we propose a new SI approach based on \emph{parametric-programming (PP)} \citep{Ritter84, Allgower93, Gal95, Best96}, which we call \emph{PP-based SI}, for resolving several major limitations of the seminal polytope-based SI proposed by  \cite{lee2016exact}.
The polytope-based SI is applicable when the selection event can be characterized as a polytope in the data space.
Otherwise, the only way is to consider extra conditions, e.g., sign conditioning, so that the over-conditioned event is characterized as a polytope, which leads to loss of statistical power.
In contrast, with the proposed PP-based SI, it is possible to characterize the selection event even if they cannot be described by a polytope.

$\bullet$ We introduce a method to compute the continuum path of Lasso solutions in the direction of interest, which is sub-sequently used to identify the exact sampling distribution of the test statistic with the minimum amount of conditioning.
Therefore, the PP-based SI can fundamentally resolve the over-conditioning problem, which was a major concern in polytope-based SI, to achieve the high statistical power.
Although the concept of PP has been used in various 
problems \citep{osborne00a, Efron04a, HasRosTibZhu04, Rosset05, BacHecHor06, RosZhu07, Tsuda07, Lee07, garrigues2008homotopy, Takeuchi09a, Karasuyama11, hocking11a, Karasuyama12a, lei2019fast}, this is the first study that introduces a piecewise-linear PP approach for characterizing the selection events in SI.

$\bullet$ Furthermore, by using PP-based SI, we can perform SI \emph{with minimal conditioning} for regularization parameter selection by cross-validation, which is complicated and was not possible with polytope-based SI.
Besides, we show that our proposed method is general and can be applied in several settings as well as other selection models such as elastic net and interaction model. 

%

%
%
Figure \ref{fig:fig_intro} shows the schematic illustration of the proposed method.
For reproducibility, our implementation is available at
\begin{center}
\href{https://github.com/vonguyenleduy/parametric_lasso_selective_inference}{https://github.com/vonguyenleduy/parametric\_lasso\_\\selective\_inference}
\end{center}



\section{Problem Statement}

To formulate the problem, we consider a random response vector 
\begin{equation} \label{eq:random_vector}
{\bm Y} = (Y_1, ..., Y_n)^\top \sim \NN({\bm \mu}, \Sigma),
\end{equation}
where $n$ is the number of instances, ${\bm \mu}$ is modeled as a linear function of $p$ features ${\bm x}_1, ..., {\bm x}_p \in \RR^n$, and $\Sigma \in \RR^{n \times n}$ is a covariance matrix which is known or estimable from independent data.
The goal is to statistically quantify the significance of the relation between the features and response while properly controlling the false positive rate. 
To achieve the goal, the authors in \cite{lee2016exact} have proposed a practical SI framework, in which a subset of features is first ``selected'' by the Lasso, and the inferences are then conducted for each selected feature.

\paragraph{Feature selection and its selection event.}
Given an observed response 
vector ${\bm y}^{\rm obs} \in \RR^n$ sampled from the model (\ref{eq:random_vector}), the Lasso optimization problem is given by
\begin{equation} \label{eq:lasso}
	\hat{{\bm \beta}} = \argmin \limits_{{\bm \beta} \in \RR^p} \frac{1}{2} \|{\bm y}^{\rm obs} - X {\bm \beta}\|^2_2 + \lambda \|{\bm \beta}\|_1,
\end{equation}
where $X \in \RR^{n \times p}$ is a feature matrix, and $\lambda \geq 0$ is a regularization parameter. 
Since the Lasso produces sparse solutions, the active set selected by applying the Lasso to ${\bm y}^{\rm obs}$ is defined as 
\begin{align}
	\cA_{\rm obs} = \cA({\bm y}^{\rm obs}) = \{ j : \hat{\beta}_j \neq 0\}.
\end{align}
Then, the event that the Lasso active set for a random vector $\bm Y$ is the same as ${\bm y}^{\rm obs}$ is written as
\begin{align}
	\left \{ \cA(\bm Y) = \cA({\bm y}^{\rm obs}) \right \}.
\end{align}

\paragraph{Statistical inference for the selected feature.}
The selected $j^{\rm th}$ coefficient is written as $\hat{\beta}_{j} = \bm \eta_j^\top \bm y^{\rm obs} $ by defining 
\begin{align} \label{eq:eta}
\bm{\eta}_j = X_{\cA_{\rm obs}} \left( X^\top_{\cA_{\rm obs}} X_{\cA_{\rm obs}}\right)^{-1} \bm{e}_j,
\end{align}
where $\bm{e}_j \in \RR^{|\cA_{\rm obs}|}$ is a basis vector with a $1$ at position $j^{\rm th}$.
For the inference on the $j^{\rm th}$ selected feature, we consider the following statistical test
\begin{equation}
 {\rm H}_{0, j}: \bm \eta_j^\top \bm \mu  = 0 \quad \text{vs.} \quad {\rm H}_{1, j}: \bm \eta_j^\top \bm \mu \neq 0.
\end{equation}
%
%
%
%
Since the hypothesis is generated from the data, selection bias exists.  
In order to correct the selection bias, we have to remove the information that has been used for initial hypothesis generating process. 
This is achieved by considering the sampling distribution of the test statistic $\bm{\eta}^\top_j \bm Y$ conditional on the selection event, i.e.,
\begin{equation}\label{eq:condition_model}
	\bm{\eta}^\top_j \bm Y \mid \left \{ \cA(\bm Y) = \cA({\bm y}^{\rm obs}), {\bm q}({\bm Y}) = {\bm q} ({\bm y}^{\rm obs})\right \},
\end{equation}
where ${\bm q}({\bm Y}) = (I_n - {\bm c} {\bm \eta}^\top_j) {\bm Y}$ with $\bm c = \Sigma {\bm \eta}_j ({\bm \eta}^\top_j \Sigma {\bm \eta}_j)^{-1}$.
The second condition ${\bm q}({\bm Y}) = {\bm q} ({\bm y}^{\rm obs})$ 
indicates the component that is independent of the test statistic for a random vector $\bm Y$ is the same as the one for $\bm y^{\rm obs}$.
The ${\bm q}(\bm Y)$ corresponds to the component $\bm z$ in the seminal paper (see \cite{lee2016exact}, Sec 5, Eq 5.2 and Theorem 5.2).

Once the selection event is identified, we can easily compute the pivotal quantity
\begin{equation}\label{eq:pivotal_quantity}
\footnotesize{
	F^{\cZ}_{\bm{\eta}^\top_j \bm \mu, {\bm \eta}^\top_j \Sigma {\bm \eta}_j} (\bm{\eta}^\top_j \bm Y) 
	\mid \left \{ \cA(\bm Y) = \cA({\bm y}^{\rm obs}), {\bm q}({\bm Y}) = {\bm q} ({\bm y}^{\rm obs}) \right \},}
\end{equation}
which is the c.d.f. of the truncated Normal distribution with mean $\bm{\eta}^\top_j \bm \mu$, variance ${\bm \eta}^\top_j \Sigma {\bm \eta}_j$, and the truncation region $\cZ$ which is calculated based on the selection event.
The pivotal quantity is crucial for calculating $p$-value and confidence interval.
Based on the pivotal quantity, we can consider \emph{selective type I error} or \emph{selective $p$-value} \citep{fithian2014optimal} in the form of 
 \begin{align} \label{eq:selective_p_value}
  P^{\rm selective}_j = 2\ \min\{\pi_j, 1 - \pi_j\},
\end{align}
where  $\pi_j = 1 - F^{\cZ}_{0, {\bm \eta}^\top_j \Sigma {\bm \eta}_j} (\bm{\eta}^\top_j {\bm Y})$,
which is \emph{valid} in the sense that 
\begin{align*}
	{\rm Prob}_{{\rm H}_{0, j}} \left(P^{\rm selective}_j < \alpha \right) = \alpha, \forall \alpha \in [0, 1].
\end{align*}
Furthermore,
to obtain $1 - \alpha$ confidence interval for any $\alpha \in [0, 1]$, by inverting the pivotal quantity in Equation (\ref{eq:pivotal_quantity}), we can find the smallest and largest values of $\bm{\eta}^\top_j \bm \mu$ such that the value of pivotal quantity remains in the interval $\left[ \frac{\alpha}{2}, 1- \frac{\alpha}{2} \right]$ \citep{lee2016exact}. 

However, the main challenge is that characterizing $\cA(\bm Y) = \cA({\bm y}^{\rm obs})$ in Equation (\ref{eq:condition_model}) is intractable because we have to consider $2^{\left |\cA({\bm y}^{\rm obs}) \right|}$ possible sign vectors. 
To overcome this issue, \cite{lee2016exact} consider inference conditional not only on the selected features but also on their signs.
Unfortunately, additionally considering the signs leads to low statistical power because of \emph{over-conditioning}.

In the next section, we will introduce a method for identifying the minimum amount of conditioning 
$\left \{ \cA(\bm Y) = \cA({\bm y}^{\rm obs}), {\bm q}({\bm Y}) = {\bm q} ({\bm y}^{\rm obs})\right \}$, which leads to high statistical power.
The main idea is to compute the path of Lasso solutions in the direction of interest $\bm \eta_j$.
By focusing on the line along $\bm \eta_j$, we can skip majority of the polytopes that do not affect the truncated Normal sampling distribution because they do not intersect with this line.
In other words, we can skip majority of combinations of signs that never appear when applying Lasso to the data on the line.

\section{Proposed Method}
In this section, we propose a parametric programming approach for characterizing conditioning event in (\ref{eq:condition_model}). 
The schematic illustration is shown in Figure \ref{fig:fig_intro}. 

\subsection{Conditional Data Space Characterization}
Let us define the set of $\bm y \in \RR^n$ which satisfies the conditions in Equation (\ref{eq:condition_model}) as 
\begin{equation} \label{eq:conditional_data_space}
	\hspace{-0.05mm} \cY = \{ {\bm y} \in \RR^{n} \mid \cA({\bm y}) = \cA({\bm y}^{\rm obs}), {\bm q} (\bm y) = {\bm q} ({\bm y}^{\rm obs})\}.
\end{equation}
According to the second condition, the data in $\cY$ is restricted to a line (see Sec 6 in \cite{liu2018more}, and \cite{fithian2014optimal}).
Therefore, the set $\cY$ can be re-written, using a scalar parameter $z \in \RR$, as
\begin{equation} \label{eq:parametrized_data_space}
	\cY = \left \{ {\bm y}(z) = {\bm a} + {\bm b} z \mid z \in \cZ \right \},
\end{equation}
where 
${\bm a} = {\bm q}(\bm y^{\rm obs})$, 
${\bm b} = \Sigma {\bm \eta}_j ({\bm \eta}^\top_j \Sigma {\bm \eta}_j)^{-1} $,
and 
\begin{equation} \label{eq:truncation_region_z}
	\cZ = \left \{ z \in \RR \mid \cA({\bm y}(z)) = \cA({\bm y}^{\rm obs}) \right \}.
\end{equation}
Now, let us consider a random variable $Z \in \RR$ and its observation $z^{\rm obs} \in \RR$, which satisfy ${\bm Y} = {\bm a} + {\bm b} Z$ and ${\bm y}^{\rm obs} =  {\bm a} + {\bm b} z^{\rm obs}$. 
The conditional inference in (\ref{eq:condition_model}) is re-written as the problem of characterizing the sampling distribution of 
\begin{equation} \label{eq:condition_parametric}
	Z \mid \left \{ Z \in \cZ \right \}.
\end{equation}
Since $Z \sim \NN(0, {\bm \eta}^\top_j \Sigma {\bm \eta}_j)$ under the null hypothesis, the law of $Z \mid Z \in \cZ$ follows a truncated Normal distribution.
Once the truncation region $\cZ$ is identified, the pivotal quantity in Equation (\ref{eq:pivotal_quantity}) is equal to $F^{\cZ}_{0, {\bm \eta}^\top_j \Sigma {\bm \eta}_j} (Z)$, and can be easily obtained.
Thus, the remaining task is to characterize $\cZ$.
\paragraph{Characterization of truncation region $\cZ$.}
Let us introduce the optimization problem (\ref{eq:lasso}) with parametrized response vector ${\bm y}(z)$ for $z \in \RR$ as
\begin{equation} \label{eq:parametric_lasso}
	\hat{{\bm \beta}}(z) = \argmin \limits_{{\bm \beta} \in \RR^p} \frac{1}{2} \|{\bm y}(z) - X {\bm \beta}\|^2_2 + \lambda \|{\bm \beta}\|_1.
\end{equation}
The subdifferential of the $\ell_1$-norm at $\hat{{\bm \beta}}(z)$ is defined as follows:
\begin{align*}
	\partial \|\hat{{\bm \beta}}(z)\|_1 = {\bm s}(z) : 
	\begin{cases}
	s_j (z) = {\rm sign}(\hat{\beta}_j(z)) & \text{ if } \hat{\beta}_j(z) \neq 0\\
	 s_j (z) \in [-1, 1]  & \text{ if } \hat{\beta}_j(z) = 0
	\end{cases},
\end{align*}
where we denote ${\bm s}(z) = {\rm sign}(\hat{{\bm \beta}} (z))$.
Then, for any $z$ in $\RR$, the optimality condition is given by
\begin{align}
	X^\top\left ( X \hat{{\bm \beta}}(z) - {\bm y}(z) \right ) + \lambda {\bm s}(z) = 0, 
\end{align}
${\bm s}(z) \in \partial \|\hat{{\bm \beta}}(z)\|_1$. To construct the truncation region $\cZ$ in Equation (\ref{eq:truncation_region_z}), we have to 1) compute the entire path of $\hat{{\bm \beta}}(z)$, and 2) identify the set of intervals of $z$ on which $\cA({\bm y}(z)) = \cA({\bm y}^{\rm obs})$.
However, it seems intractable to compute $\hat{{\bm \beta}}(z)$ for infinitely many values of $z \in \RR$.
Our main idea to overcome this difficulty is to propose a parametric programming method for efficiently computing a finite number of ``transition points'' at which the active set changes.

\subsection{A Piecewise Linear Homotopy}
We now derive the main technique.
We show that $\hat{{\bm \beta}}(z)$ is a piecewise linear function of $z$.
To make the notation lighter, we write $\cA_z = \cA({\bm y}(z))$, and we denote \emph{the set of inactive features as $\cA^c_z$.}

\begin{lemma} \label{lemma:piecewise_linear}
Consider two real values $z^\prime$ and $z$ $(z^\prime > z)$. 
Suppose $|s_j(z)| < 1$ for all $j \in \cA^c_z$, $|s_j(z^\prime)| < 1$ for all $j \in \cA^c_{z^\prime}$, and $X^\top_{\cA_z} X_{\cA_z}$ is invertible.
If $\hat{\bm \beta}_{\cA_z}(z)$  and $\hat{\bm \beta}_{\cA_{z^\prime}}(z^\prime)$ have the same active set and the same signs, then we have
\begin{align} 
	\hat{\bm \beta}_{\cA_z}(z^\prime) - \hat{\bm \beta}_{\cA_z}(z) 
	&= 
	{\bm \psi}_{\cA_z}(z) \times (z^\prime - z), \label{eq:lemma1_eq1}\\
	\lambda {\bm s}_{\cA^c_z}(z^\prime) - \lambda {\bm s}_{\cA^c_z}(z) 	&=
	{\bm \gamma}_{\cA^c_z}(z) \times (z^\prime - z), \label{eq:lemma1_eq2}
\end{align}
where 
${\bm \psi}_{\cA_z}(z) = (X^\top_{\cA_z} X_{\cA_z})^{-1} X^\top_{\cA_z} {\bm b}$, 
and 
${\bm \gamma}_{\cA^c_z}(z) = X^\top_{\cA^c_z} {\bm b} - X^\top_{\cA^c_z} X_{\cA_z} {\bm \psi}_{\cA_z}(z)$.
\end{lemma}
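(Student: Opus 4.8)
The plan is to start from the KKT optimality conditions of the parametric Lasso~\eqref{eq:parametric_lasso} written at both points $z$ and $z'$, and subtract them. Since $\hat{\bm\beta}_{\cA_z}(z)$ and $\hat{\bm\beta}_{\cA_{z'}}(z')$ share the same active set $\cA_z$ and the same signs $\bm s_{\cA_z}(z)$, on the coordinates in $\cA_z$ the stationarity condition reads $X_{\cA_z}^\top\bigl(X_{\cA_z}\hat{\bm\beta}_{\cA_z}(z) - {\bm y}(z)\bigr) + \lambda\,\bm s_{\cA_z}(z) = 0$ at $z$, and the analogous equation with the \emph{same} sign vector at $z'$. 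Subtracting these two and using ${\bm y}(z') - {\bm y}(z) = {\bm b}(z'-z)$ makes the $\lambda\bm s_{\cA_z}$ terms cancel, leaving
\begin{align*}
	X_{\cA_z}^\top X_{\cA_z}\bigl(\hat{\bm\beta}_{\cA_z}(z') - \hat{\bm\beta}_{\cA_z}(z)\bigr) = X_{\cA_z}^\top {\bm b}\,(z'-z).
\end{align*}
Inverting $X_{\cA_z}^\top X_{\cA_z}$, which is allowed by hypothesis, yields \eqref{eq:lemma1_eq1} with ${\bm\psi}_{\cA_z}(z) = (X_{\cA_z}^\top X_{\cA_z})^{-1}X_{\cA_z}^\top{\bm b}$.

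Next I would turn to the inactive coordinates. On $\cA_z^c$ the stationarity condition gives $\lambda\,{\bm s}_{\cA_z^c}(z) = X_{\cA_z^c}^\top\bigl({\bm y}(z) - X_{\cA_z}\hat{\bm\beta}_{\cA_z}(z)\bigr)$, and similarly at $z'$ (note the coefficients outside $\cA_z$ are zero at both points, since the active set is exactly $\cA_z$). Subtracting and again substituting ${\bm y}(z') - {\bm y}(z) = {\bm b}(z'-z)$ together with the expression for $\hat{\bm\beta}_{\cA_z}(z') - \hat{\bm\beta}_{\cA_z}(z)$ just derived gives
\begin{align*}
	\lambda\,{\bm s}_{\cA_z^c}(z') - \lambda\,{\bm s}_{\cA_z^c}(z) = \bigl(X_{\cA_z^c}^\top{\bm b} - X_{\cA_z^c}^\top X_{\cA_z}{\bm\psi}_{\cA_z}(z)\bigr)(z'-z),
\end{align*}
which is \eqref{eq:lemma1_eq2} with ${\bm\gamma}_{\cA_z^c}(z)$ as stated.

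The genuinely delicate point — the step I expect to be the main obstacle — is justifying that the active set and signs can be treated as \emph{constant} on the whole segment between $z$ and $z'$, i.e. that comparing the two endpoints is the same as having a single common stationarity system. The conditions $|s_j(z)| < 1$ for $j\in\cA_z^c$ and $|s_j(z')| < 1$ for $j\in\cA_{z'}^c$ are exactly what rule out a coordinate sitting on the boundary $|s_j| = 1$ (the degenerate case where an active-set change is ambiguous), and together with uniqueness of the Lasso solution under invertibility of $X_{\cA_z}^\top X_{\cA_z}$ they force the solution path to be affine, with no transition, between the two points; I would spell out that $\hat{\bm\beta}_{\cA_z}(z) + {\bm\psi}_{\cA_z}(z)(z'' - z)$ extended to any intermediate $z''$ satisfies all KKT conditions (stationarity on $\cA_z$ by construction, correct signs because signs agree at both endpoints and the interpolant is affine hence monotone in each coordinate, and strict dual feasibility on $\cA_z^c$ by the analogous affine interpolation of $\bm s_{\cA_z^c}$), so by uniqueness it \emph{is} the solution. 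Everything else is the routine linear algebra sketched above.
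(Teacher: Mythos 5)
Your proof is correct and follows essentially the same route as the paper: write the KKT stationarity conditions at $z$ and $z'$ on the active and inactive coordinates, subtract them, use that the sign terms cancel and that ${\bm y}(z')-{\bm y}(z)={\bm b}(z'-z)$, then invert $X^\top_{\cA_z}X_{\cA_z}$. The interpolation argument you flag as the delicate point is not actually needed for this lemma, since the hypothesis already equates the active sets and signs at the two endpoints so the endpoint-wise subtraction suffices; constancy along the segment is only relevant for the transition-point analysis of Lemma \ref{lemma:transition_point}.
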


\begin{proof}
From the optimality conditions of the Lasso, we have 
\begin{align}
	\hspace{-5pt}X^\top_{\cA_z} X_{\cA_z} \hat{\bm \beta}_{\cA_z}(z) 
	- X^\top_{\cA_z} {\bm y}(z) + \lambda {\bm s}_{\cA_z}(z) = 0, 			
	\label{eq:proof_eq1} \\
	\hspace{-5pt}X^\top_{\cA_{z^\prime}} X_{\cA_{z^\prime}} \hat{\bm \beta}_{\cA_{z^\prime}}(z^\prime) 
	- X^\top_{\cA_{z^\prime}} {\bm y}(z^\prime) + \lambda {\bm s}_{\cA_{z^\prime}}(z^\prime) = 0.
	\label{eq:proof_eq2} 
\end{align}
Then, by subtracting (\ref{eq:proof_eq1}) from (\ref{eq:proof_eq2}) and $\cA_z = \cA_{z^\prime}$, we have
\begin{align*}
	\hat{\bm \beta}_{\cA_z}(z^\prime) - \hat{\bm \beta}_{\cA_z}(z) 
	&=
	(X^\top_{\cA_z} X_{\cA_z})^{-1} X^\top_{\cA_z} ({\bm y}(z^\prime) - {\bm y}(z))\\
	&=(X^\top_{\cA_z} X_{\cA_z})^{-1} X^\top_{\cA_z} ({\bm a} + {\bm b} z^\prime - {\bm a} - {\bm b} z)\\
	&=(X^\top_{\cA_z} X_{\cA_z})^{-1} X^\top_{\cA_z} {\bm b} \times (z^\prime - z).
\end{align*}
Thus, we achieve Equation (\ref{eq:lemma1_eq1}). Next, from the optimality conditions of the Lasso, we also have 
\begin{align}
	- X^\top_{\cA^c_z} X_{\cA_z} \hat{\bm \beta}_{\cA_z}(z) + X^\top_{\cA^c_z} {\bm y}(z) 
	&= 
	\lambda {\bm s}_{\cA^c_z}(z), \label{eq:proof_eq3}\\
	- X^\top_{\cA^c_{z^\prime}} X_{\cA_{z^\prime}} \hat{\bm \beta}_{\cA_{z^\prime}}(z^\prime) + X^\top_{\cA^c_{z^\prime}} {\bm y}(z^\prime) 
	&= 
	\lambda {\bm s}_{\cA^c_{z^\prime}}(z^\prime) \label{eq:proof_eq4}.
\end{align}
Similarly, by subtracting (\ref{eq:proof_eq3}) from (\ref{eq:proof_eq4}) and $\cA_z = \cA_{z^\prime}$, we can easily achieve Equation (\ref{eq:lemma1_eq2}).
\end{proof}

\begin{remark}
{\normalfont
 In this paper, we assume the uniqueness of the Lasso solution $\hat{\bm \beta}(z)$ for all $z \in \RR$ as well as $|s_j(z)| < 1$ for all $j \in \cA^c_z$ and the invertibility of $X^\top_{\cA_z} X_{\cA_z}$.
These assumptions are justified by assuming the columns of $X$ are in general position~\citep{tibshirani2013lasso}. 
Parametric programming methods for handling the rare cases where these assumptions are not satisfied have been studied, e.g., in \cite{Best96}, and can be applied to our problem setup.
In practice, when the design matrix is not in general position, it is also common to introduce an additional ridge penalty term, resulting in the elastic net \citep{zou2005regularization}.
Our proposed method can be extended to the elastic net case (see Appendix \ref{ext:elastic_net} for the details).  
 %
%
%
%
}
\end{remark}

\paragraph{Computation of the transition point.} From Lemma \ref{lemma:piecewise_linear}, the solution $\hat{\bm \beta}(z)$ is a linear function of $z$ until $z$ reaches a transition point at which either an element of $\hat{\bm \beta}(z)$ becomes zero or a component of ${\bm s}(z)$ becomes one in absolute value.
We now introduce how the transition point is identified.

\begin{lemma}\label{lemma:transition_point}
Let $z$ be a real value such that $\max_{j \in \cA^c_z} |s_j(z)| < 1$.
Then, 
$\cA_{z^\prime} = \cA_z$, 
$\max_{j \in \cA^c_{z^\prime}} |s_j(z^\prime)| < 1$, and 
${\bm s}(z) = {\bm s}(z^\prime)$ 
for any real value $z^\prime$ 
in the interval $[z, z + t_z)$, 
where $z + t_z$ is the value of transition point, 
\begin{gather}
	t_z = \min 
	\left \{ 
	t^1_{z},
	t^2_{z}
	\right \}, \\
	t^1_{z} = \min \limits_{j \in \cA_z} \left( - \frac{\hat{\beta}_j(z)}{\psi_j(z)} \right)_{++}, \label{eq:t_1_z} \\
	t^2_{z} = \min \limits_{j \in \cA^c_z} \left( \lambda \frac{{\rm sign}(\gamma_j(z)) - s_j(z)}{\gamma_j(z)} \right)_{++} \label{eq:t_2_z}.
\end{gather}
%
Here, we use the convention that for any $m \in \RR$, $(m)_{++} = m$ if $m > 0$, and $(m)_{++} = \infty$ otherwise.

\end{lemma}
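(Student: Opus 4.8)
The plan is to exhibit, for every $z' \in [z, z+t_z)$, an explicit candidate pair $(\tilde{\bm\beta}(z'), \tilde{\bm s}(z'))$ obtained by linear extrapolation from $z$, to verify that it satisfies the Lasso optimality conditions, and then to appeal to uniqueness of the Lasso solution to conclude $\hat{\bm\beta}(z') = \tilde{\bm\beta}(z')$; all three assertions are then immediate.

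First I would set $\tilde{\bm\beta}_{\cA_z}(z') = \hat{\bm\beta}_{\cA_z}(z) + {\bm\psi}_{\cA_z}(z)(z'-z)$ and $\tilde{\bm\beta}_{\cA^c_z}(z') = {\bm 0}$, with candidate subgradient $\tilde{\bm s}_{\cA_z}(z') = {\bm s}_{\cA_z}(z)$ and $\lambda\tilde{\bm s}_{\cA^c_z}(z') = \lambda{\bm s}_{\cA^c_z}(z) + {\bm\gamma}_{\cA^c_z}(z)(z'-z)$. Next I would check stationarity $X^\top(X\tilde{\bm\beta}(z') - {\bm y}(z')) + \lambda\tilde{\bm s}(z') = 0$ blockwise: on the $\cA_z$-rows it collapses, using ${\bm y}(z') = {\bm a}+{\bm b}z'$, the definition of ${\bm\psi}_{\cA_z}(z)$ and the stationarity at $z$ (Eq.\ (\ref{eq:proof_eq1})), to the identity already in force at $z$; on the $\cA^c_z$-rows it collapses, using the definition of ${\bm\gamma}_{\cA^c_z}(z)$ and Eq.\ (\ref{eq:proof_eq3}), to the very relation defining $\tilde{\bm s}_{\cA^c_z}(z')$ --- this is exactly the computation already carried out in Lemma \ref{lemma:piecewise_linear}.

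The substantive step is the subgradient membership $\tilde{\bm s}(z') \in \partial\|\tilde{\bm\beta}(z')\|_1$. On the active block each $\tilde\beta_j(z')$ is affine in $z'$ and equals $\hat\beta_j(z)\neq 0$ at $z'=z$; its only forward zero is at $z'-z = -\hat\beta_j(z)/\psi_j(z)$ when that number is positive (and there is no forward zero otherwise, which the $(\cdot)_{++}=\infty$ convention encodes), so $z'-z < t^1_z$ keeps $\tilde\beta_j(z')$ nonzero with sign $s_j(z)$, hence $\tilde s_j(z') = {\rm sign}(\tilde\beta_j(z'))$. On the inactive block each $\tilde s_j(z')$ is affine and equals $s_j(z)\in(-1,1)$ at $z'=z$; going through the cases $\gamma_j(z) > 0$, $\gamma_j(z) < 0$, $\gamma_j(z) = 0$ shows that the first exit of $[-1,1]$ occurs at $z'-z = \lambda(\,{\rm sign}(\gamma_j(z)) - s_j(z)\,)/\gamma_j(z)$, or never when $\gamma_j(z)=0$, so $z'-z < t^2_z$ forces $|\tilde s_j(z')| < 1$. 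Finally, invoking the general-position / uniqueness assumption of the Remark, the optimality conditions admit a unique solution, so $\hat{\bm\beta}(z') = \tilde{\bm\beta}(z')$ with associated subgradient $\tilde{\bm s}(z')$; reading off: $\cA_{z'} = {\rm supp}(\tilde{\bm\beta}(z')) = \cA_z$, $\max_{j\in\cA^c_{z'}}|s_j(z')| = \max_{j\in\cA^c_z}|\tilde s_j(z')| < 1$, and ${\bm s}_{\cA_z}(z') = {\bm s}_{\cA_z}(z)$ (the endpoint $z'=z$ being trivial).

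I expect the main obstacle to be not the algebra --- the blockwise stationarity is just Lemma \ref{lemma:piecewise_linear} reread --- but two points of care: (i) matching the two formulas to the correct geometric events, i.e.\ getting the ${\rm sign}(\gamma_j(z))$ bookkeeping in $t^2_z$ right across all three sign regimes of $\gamma_j(z)$ (and treating vanishing denominators via the $(\cdot)_{++}$ convention), and likewise that $t^1_z$ really is the first $z'$ at which an active coefficient vanishes; and (ii) the passage from ``the candidate is a KKT point'' to ``the candidate is \emph{the} Lasso solution'', which genuinely requires uniqueness --- without it one could only conclude that $\cA_z$ is \emph{a} valid active set on $[z,z+t_z)$, not the one realized by $\hat{\bm\beta}(z')$.
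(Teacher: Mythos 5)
Your proof is correct, and the core computations are the same ones the paper relies on: the blockwise stationarity algebra is exactly the subtraction argument of Lemma \ref{lemma:piecewise_linear}, and your case analyses pinning down $t^1_z$ (first forward zero of an active coefficient) and $t^2_z$ (first exit of an inactive subgradient from $(-1,1)$, with the $(\cdot)_{++}$ convention) match the derivations in Appendix \ref{appendix:proof_lemma_2}. The difference is in the logical packaging. The paper argues forward: it reads Lemma \ref{lemma:piecewise_linear} as saying the solution and subgradient move linearly in $z$, and then declares that the active set and signs persist until the first of the two events---but Lemma \ref{lemma:piecewise_linear} has ``same active set and same signs'' as a \emph{hypothesis}, so used this way the argument is slightly circular. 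Your route closes that gap: you build the linearly extrapolated pair $(\tilde{\bm\beta}(z'),\tilde{\bm s}(z'))$ as a candidate, verify the KKT system (stationarity plus subgradient membership, the latter being precisely where $t^1_z$ and $t^2_z$ enter), and only then invoke the uniqueness/general-position assumption from the Remark to identify the candidate with $\hat{\bm\beta}(z')$ and read off $\cA_{z'}=\cA_z$, ${\bm s}(z')={\bm s}(z)$, and $\max_{j\in\cA^c_{z'}}|s_j(z')|<1$. You are also right that uniqueness is genuinely needed for this last identification step, and your explicit treatment of the $\gamma_j(z)=0$ regime is a small point of care the paper glosses over. In short: same calculations, but your verification-plus-uniqueness structure is the more airtight way to state the proof.
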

\begin{proof}
From Equation (\ref{eq:lemma1_eq1}), we can see that $\hat{\bm \beta}_{\cA_z}(z)$ is a function of $z$.
For a real value $z$, there exists $t^1_{z}$ such that for any real value $z^\prime$ in $[z, z + t^1_{z})$, all elements of $\hat{\bm \beta}_{\cA_{z^\prime}}(z^\prime)$ remain the same signs with $\hat{\bm \beta}_{\cA_z}(z)$.
Similarly, from Equation (\ref{eq:lemma1_eq2}), we can see that $\bm s_{\cA^c_z}(z)$ is a function of $z$.
Then, for a real value $z$, there exists $t^2_{z}$ such that for any real value $z^\prime$ in $[z, z + t^2_{z})$, all elements of $\bm s_{\cA^c_{z^\prime}}(z^\prime)$ are smaller than 1 in absolute value.
Finally, by taking $t_z = \min \{t^1_{z}, t^2_{z}\}$, we obtain the interval in which the active set and signs of Lasso solution remain the same.
The remaining task is how to compute $t^1_{z}$ and $t^2_{z}$.
We defer the detailed derivations of $t^1_{z}$ and $t^2_{z}$ to the Appendix \ref{appendix:proof_lemma_2}.
\end{proof}

\begin{algorithm}[!t]
\renewcommand{\algorithmicrequire}{\textbf{Input:}}
\renewcommand{\algorithmicensure}{\textbf{Output:}}
\begin{footnotesize}
 \begin{algorithmic}[1]
  \REQUIRE $X, {\bm y}^{\rm obs}, \lambda, [z_{\rm min}, z_{\rm max}]$
	\vspace{2pt}
	\STATE Compute Lasso solution and obtain observed $\cA_{\rm obs}$ for data $(X, {\bm y}^{\rm obs})$
	\vspace{2pt}
	\FOR {each selected feature $j \in \cA_{\rm obs}$}
		\vspace{2pt}
		\STATE Compute $\bm \eta_j$ $\leftarrow$ Equation (\ref{eq:eta}) 
		\vspace{4pt}
		\STATE Compute $\bm a$ and $\bm b$ $\leftarrow$ Equation (\ref{eq:parametrized_data_space})
		\vspace{4pt}
		\STATE $ \hat{\bm \beta}(z), \cA_z \leftarrow {\tt compute\_solution\_path}$ ($X$, $\lambda$,  $\bm a$, $\bm b$, $[z_{\rm min}, z_{\rm max}]$)
		\vspace{4pt}
		\STATE Truncation region $\cZ \leftarrow \{z: \cA_z = \cA_{\rm obs}\}$
		\vspace{4pt}
		\STATE $P^{\rm selective}_j \leftarrow $ Equation (\ref{eq:selective_p_value}) (and$/$or selective confidence interval of $\beta_j$)
		\vspace{2pt}
	\ENDFOR
	\vspace{2pt}
  \ENSURE $\{P^{\rm selective}_j\}_{j \in \cA_{\rm obs}}$  (and$/$or selective confidence intervals of $\beta_j, j \in \cA_{\rm obs}$)
 \end{algorithmic}
\end{footnotesize}
\caption{{\tt parametric\_lasso\_SI}}
\label{alg:parametric_lasso_SI}
\end{algorithm}

\subsection{Algorithm}

In this section, we show the detailed algorithm of our proposed parametric programming method. In Algorithm \ref{alg:parametric_lasso_SI}, for feature selection step, we just simply apply Lasso to the data $(X, \bm y^{\rm obs})$, and obtain the active set $\cA_{\rm obs}$.
Then, we conduct SI for each selected feature.
For testing $\beta_j, j \in \cA_{\rm obs},$ we first obtain the direction of interest $\bm \eta_j$, which can be easily computed as in Equation (\ref{eq:eta}). 
Second, the main task is to compute the solution path of $\hat{\bm {\beta}}(z) $ in Equation (\ref{eq:parametric_lasso}) for the parametrized response vector $\bm y (z)$,
where, note that, the parametrized solution $\hat{\bm {\beta}}(z)$ are different among different $j \in \cA_{\rm obs}$ since the direction of interest $\bm \eta_j$ depends on $j$.
This task can be done by Algorithm \ref{alg:solution_path}.
Finally, after having the path, we can easily obtain truncation region $\cZ$ which is used to compute selective $p$-value or selective confidence interval.

\begin{algorithm}[!t]
\renewcommand{\algorithmicrequire}{\textbf{Input:}}
\renewcommand{\algorithmicensure}{\textbf{Output:}}
\begin{footnotesize}
 \begin{algorithmic}[1]
  \REQUIRE $X, \lambda, \bm a, \bm b, [z_{\rm min}, z_{\rm max}]$
	\vspace{2pt}
	\STATE Initialization: $k = 0$, $z_k=z_{\rm min}$, $\cT = {z_k}$
	\vspace{2pt}
	\WHILE {$z_k < z_{\rm max}$}
		\vspace{2pt}
		\STATE $\bm y(z_k) = \bm a + \bm b z_k$
		\vspace{4pt}
		\STATE $t_{z_k}, \hat{\bm \beta}(z_k), \cA_{z_k} \leftarrow {\tt compute\_step\_size}(X, \bm y(z_k)$, $\lambda$)
		\vspace{4pt}
		\STATE $z_{k+1} = z_k + t_{z_k}$, $\cT = \cT \cup \{z_{k+1}\}$ \\ \vspace{2pt} ($z_{k+1}$ is the value of the next transition point)
		\vspace{2pt}
		\STATE $k = k + 1$
		\vspace{2pt}
	\ENDWHILE
	\vspace{2pt}
  \ENSURE $\{\hat{\bm \beta}(z_k)\}_{z_k \in \cT}, \{\cA_{z_k}\}_{z_k \in \cT}$
 \end{algorithmic}
\end{footnotesize}
\caption{{\tt compute\_solution\_path}}
\label{alg:solution_path}
\end{algorithm}

\begin{algorithm}[!t]
\renewcommand{\algorithmicrequire}{\textbf{Input:}}
\renewcommand{\algorithmicensure}{\textbf{Output:}}
\begin{footnotesize}
 \begin{algorithmic}[1]
  \REQUIRE $X, \bm y(z), \lambda$
	\STATE Compute primal$/$dual Lasso solution $\hat{\bm \beta}(z), \hat{\bm s}(z)$ for data $(X, \bm y(z))$
	\vspace{2pt}
	\STATE Obtain active set $\cA_z = \{j : \hat{\beta}_j(z) \neq 0\}$
	\vspace{4pt}
	\STATE Compute ${\bm \psi}_{\cA_z}(z)$, $\ {\bm \gamma}_{\cA^c_z}(z)$ $\leftarrow$ Lemma \ref{lemma:piecewise_linear}
	\vspace{4pt}
	\STATE $t^1_z$, $t^2_z$ $\leftarrow$ Equations (\ref{eq:t_1_z}) and (\ref{eq:t_2_z}) in Lemma \ref{lemma:transition_point}
	\vspace{4pt}
	\STATE $t_z = \min \{t^1_z, t^2_z\}$
	\vspace{2pt}
  \ENSURE $t_z, \hat{\bm \beta}(z), \cA_{z}$
 \end{algorithmic}
\end{footnotesize}
\caption{{\tt compute\_step\_size}}
\label{alg:compute_step_size}
\end{algorithm}

In Algorithm \ref{alg:solution_path}, a sequence of transition points are computed one by one.
The algorithm is initialized at $z_k = z_{\rm min}, k = 0$.
At each $z_k$, the task is to find the next transition point $z_{k+1}$, where the active set changes.
This task can be done by computing the step size in Algorithm \ref{alg:compute_step_size}.
This step is repeated until $z_k > z_{\rm max}$. 
The algorithm returns the sequences of Lasso solutions and transition points.

\paragraph{Choice of $[z_{\rm min}, z_{\rm max}]$.} 
Under the normality, very positive and negative values of $z$ does not affect the inference. Therefore, it is reasonable to consider range of values, e.g., $[ - 20 \sigma, 20 \sigma]$ \citep{liu2018more}, where $\sigma$ is the standard deviation of the sampling distribution of test statistic.


\subsection{Characterization of CV-based Tuning Parameter Selection Event}
In this section, we introduce a new way to characterize the \emph{minimal} selection event that $\lambda$ is chosen based on the data, e.g., via cross-validation, which is complicated and thus none of the currently available Lasso SI methods can handle. 
Given a set of regularization parameter candidates $\Lambda$, we denote $\cV(\bm y^{\rm obs}) =  \lambda^{\rm obs} \in \Lambda$ is the event that $\lambda^{\rm obs}$ is selected when performing validation on $\bm y^{\rm obs}$.
The conditional inference on selected feature $j$ when applying Lasso on $\{X, \bm y^{\rm obs}\}$ is then defined as 
\begin{align}
	\bm \eta_j^\top \bm Y \mid 
	 \{ 
		&\cA(\bm Y) = \cA(\bm y^{\rm obs}),  \nonumber \\ 
		&\cV(\bm Y) = \cV(\bm y^{\rm obs}),
		\bm q(\bm Y) = \bm q(\bm y^{\rm obs})
	 \}.
\end{align}
The conditional data space in (\ref{eq:parametrized_data_space}) with validation selection event is re-defined as 
\begin{align}
	\cY =\{\bm y(z) = \bm a + \bm bz \mid z \in \cZ_{\rm CV}\},
\end{align}
where 
$
	\cZ_{\rm CV} = \{z \in \RR \mid 
		\cA(\bm y(z)) = \cA(\bm y^{\rm obs}),
		\cV(\bm y(z)) = \cV(\bm y^{\rm obs})\}.
$
We now can easily construct $\cZ_1 = \{z \in \RR \mid \cA(\bm y(z)) = \cA(\bm y^{\rm obs})\}$ by using the proposed method in previous parts. The remaining task is to identify 
$
\cZ_2 = \{z \in \RR \mid \cV(\bm y(z)) = \cV(\bm y^{\rm obs})\}.
$ 
Finally, $\cZ_{\rm CV} = \cZ_1 \cap \cZ_2$.

For notational simplicity, we consider the case where the data is divided into training and validation sets, and the latter is used for selecting $\lambda$. The following discussion can be easily extended to cross-validation scenario. 
Let us re-write $
\{X, \bm{y}^{\rm obs}\} = 
\left \{
(X_{\rm train}\ X_{\rm val} )^\top \in \RR^{n \times p},
(\bm y^{\rm obs}_{\rm train}\ \bm y^{\rm obs}_{\rm val})^\top \in\RR^{n}
\right \}.
$ 
For $\lambda \in \Lambda$, the Lasso problem on parametrized training response vector is written as 
\begin{align*}
	\hat{\bm \beta}_\lambda(z) \in \argmin \limits_{\bm \beta \in \RR^p} \frac{1}{2} \|\bm y_{\rm train}(z) - X_{\rm train} \bm \beta\|^2_2 + \lambda \|\bm \beta\|_1.
\end{align*}
The validation error is defined as 
$
	E_\lambda(z) = \frac{1}{2} \|\bm y_{\rm val}(z) - X_{\rm val} \bm \hat{\bm \beta}_\lambda(z)\|^2_2.
$
Then, we can re-defined
$
	\cZ_2 = \{z \in \RR \mid E_{\lambda^{\rm obs}}(z) \leq E_\lambda(z) \text{ for any } \lambda \in \Lambda \}.
$
Since $\hat{\bm \beta}_\lambda(z)$ is a piecewise-linear function of $z$ and $\bm y_{\rm val}(z)$ is a linear function of $z$, the validation error $E_\lambda(z)$ is a picecewise-quadratic function of $z$. 
Now, for each $\lambda \in \Lambda$, we have a corresponding picecewise-quadratic function of $z$. 
Finally, we can identify $\cZ_2$ by finding the intervals of $z$ in which the validation error $E_{\lambda^{\rm obs}}(z)$ corresponding to $\lambda^{\rm obs}$ is minimum among a set of picecewise-quadratic functions.

\subsection{The Generality of the Proposed Method}

Since we can efficiently compute the path of Lasso solutions, our proposed method is flexible and can be easily extended to various respects. 
In \cite{liu2018more}, the main limitations are their method can not be applied when $p > n$, or requires huge computation time. 
With our method, all these limitations are resolved. 
We provide detailed discussions and solutions in Appendices \ref{ext:full_target} and \ref{ext:partial_target}.
Besides, we also apply the proposed method to other respects, which can not be solved by the methods in \cite{lee2016exact} and \cite{liu2018more}, including characterizing the minimum amount of conditioning in elastic net \citep{zou2005regularization} (Appendix \ref{ext:elastic_net}), marginal model (Appendix \ref{ext:marginal_model}), and interaction model (Appendix \ref{ext:interaction_model}).

\section{Experiment}
In this section, we will demonstrate the performance of the proposed method. 
Here, we present the main results.
Several additional experiments can be found in Appendix \ref{appendix:exp_details}.
%
%

\subsection{Experimental Setup}
We executed the code on Intel(R) Xeon(R) CPU E5-2687W v4 @ 3.00GHz.
\paragraph{Methods for comparison.} We show the false positive rates (FPRs), true positive rates (TPRs) and confidence intervals (CIs) for the following cases of conditional inferences:

$\bullet$ \textbf{TN-A}: conditional inference \emph{without} sign conditioning, which is mainly focused in this paper, 
\begin{align*}
\bm \eta_j^\top \bm {\bm Y} \mid \left \{\cA(\bm Y) = \cA_{\rm obs}, \bm q(\bm Y) = \bm q({\bm y}^{\rm obs}) \right \}.
\end{align*}
$\bullet$ \textbf{TN-As}: conditional inference with additional sign conditioning, which is mainly focused in \cite{lee2016exact},
\begin{align*}
\bm \eta_j^\top \bm {\bm Y} \mid \left \{ \cA(\bm Y) = \cA_{\rm obs}, \bm s = {\bm s}_{\rm obs}, \bm q(\bm Y) = \bm q({\bm y}^{\rm obs}) \right \},
\end{align*}
where $\bm s$ is the sign vector of Lasso solutions on $\bm Y$, and ${\bm s}_{\rm obs}$ is the sign vector of the Lasso solutions on ${\bm y}^{\rm obs}$.

We also show the FPRs, TPRs and CIs of data splitting (\textbf{DS}) method \citep{cox1975note}, which is the commonly used procedure for the purpose of selection bias correction. 
In this approach, the data is randomly divided in two halves — one half is used for model selection and the other half is used for inference.

\paragraph{Synthetic data generation.} We generated $n$ outcomes as $y_i = \bm x_i^\top \bm \beta + \veps_i$,  $i = 1, ..., n$, where $\bm x_i \sim \NN(0, I_p)$ in which $p = 5$, and $\veps_i \sim \NN(0, 1)$.
Here, we assume that the variance of the noise is known.
In practice, the variance can be estimated from \emph{independent} data.
We set the regularization parameter $\lambda = 1$ and significance level $\alpha = 0.05$.
We used Bonferroni correction to account for the multiplicity in all the experiments.
If we test $m$ selected features (hypotheses) at the same time, then the Bonferroni correction would test each individual hypothesis at $\alpha^\ast = \alpha / m$.
For the FPR experiments, all elements of $\bm \beta$ were set to 0 and we set $n \in \{100, 200, 300, 400, 500\} $.
For the TPR experiments, the first two elements of $\bm \beta$ were set to 0.25. 
We ran 100 trials for each $n \in \{50, 100, 150, 200\}$, and we repeated this experiments 10 times.
For the experiments of CIs, we set $n=100, p=10$, and the first 5 elements of $\bm \beta$ were set to 0.25.

\paragraph{Definition of TPR.} In SI, we only conduct statistical testing when there is at least one hypothesis discovered by the algorithm.
Therefore, the definition of TPR, which can be also called \emph{conditional power}, is as follows:
\begin{equation*}
	{\rm TPR} = \frac{{\rm \#\ correctly~detected\ \&\ rejected}}{{\rm \#\ correctly~detected}},
\end{equation*}
where ${\rm \#\ correctly~detected}$ is the number of truly positive features selected by the algorithm (e.g., Lasso) and  ${\rm \#\ rejected}$ is the number of truly positive features whose null hypothesis is rejected by SI.

\subsection{Numerical Results}
%
%
%

\begin{figure}[t]
\begin{subfigure}{.49\linewidth}
  \centering
  \includegraphics[width=\linewidth]{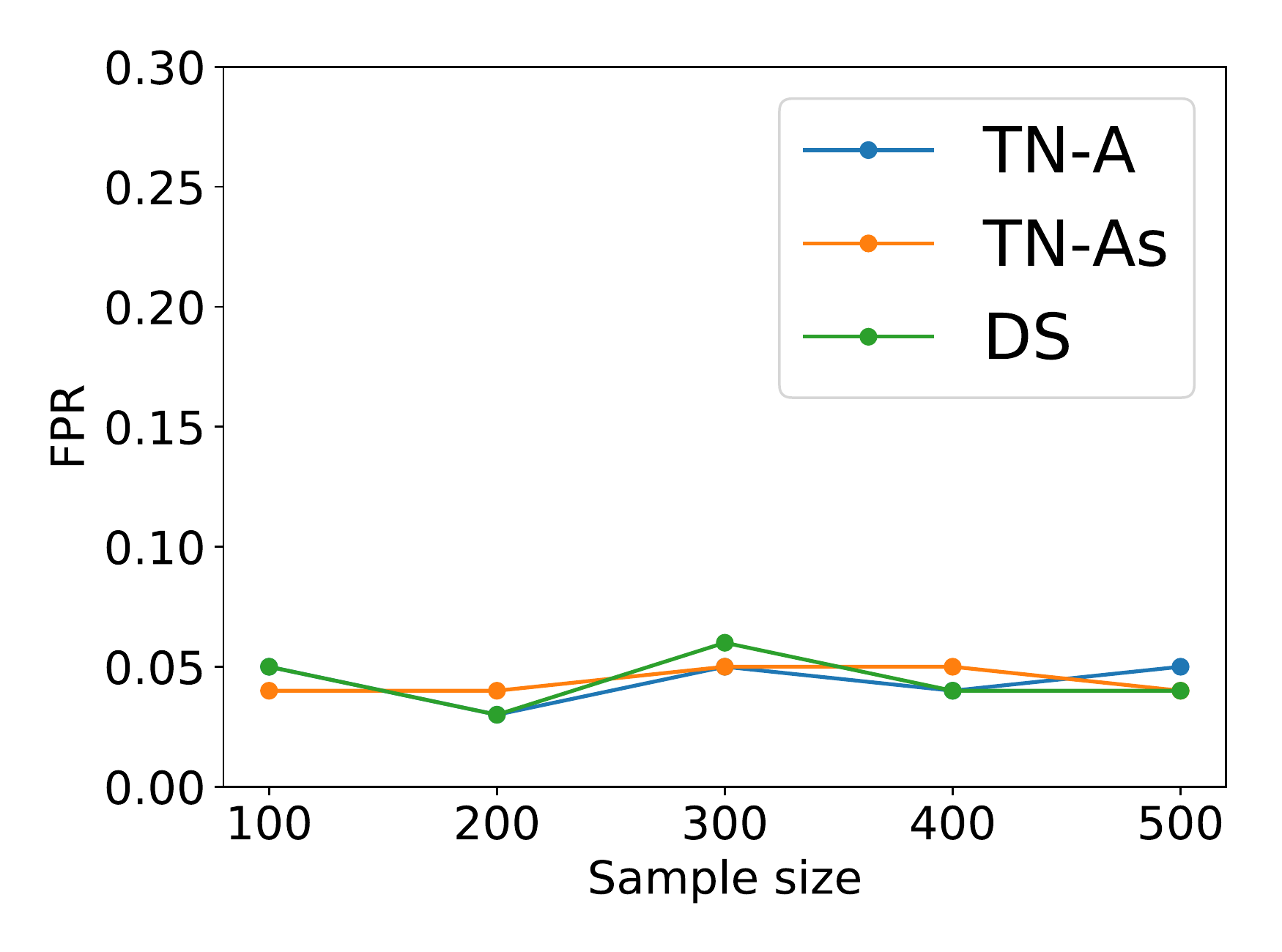}  
  \caption{FPR}
\end{subfigure}
\begin{subfigure}{.49\linewidth}
  \centering
  \includegraphics[width=\linewidth]{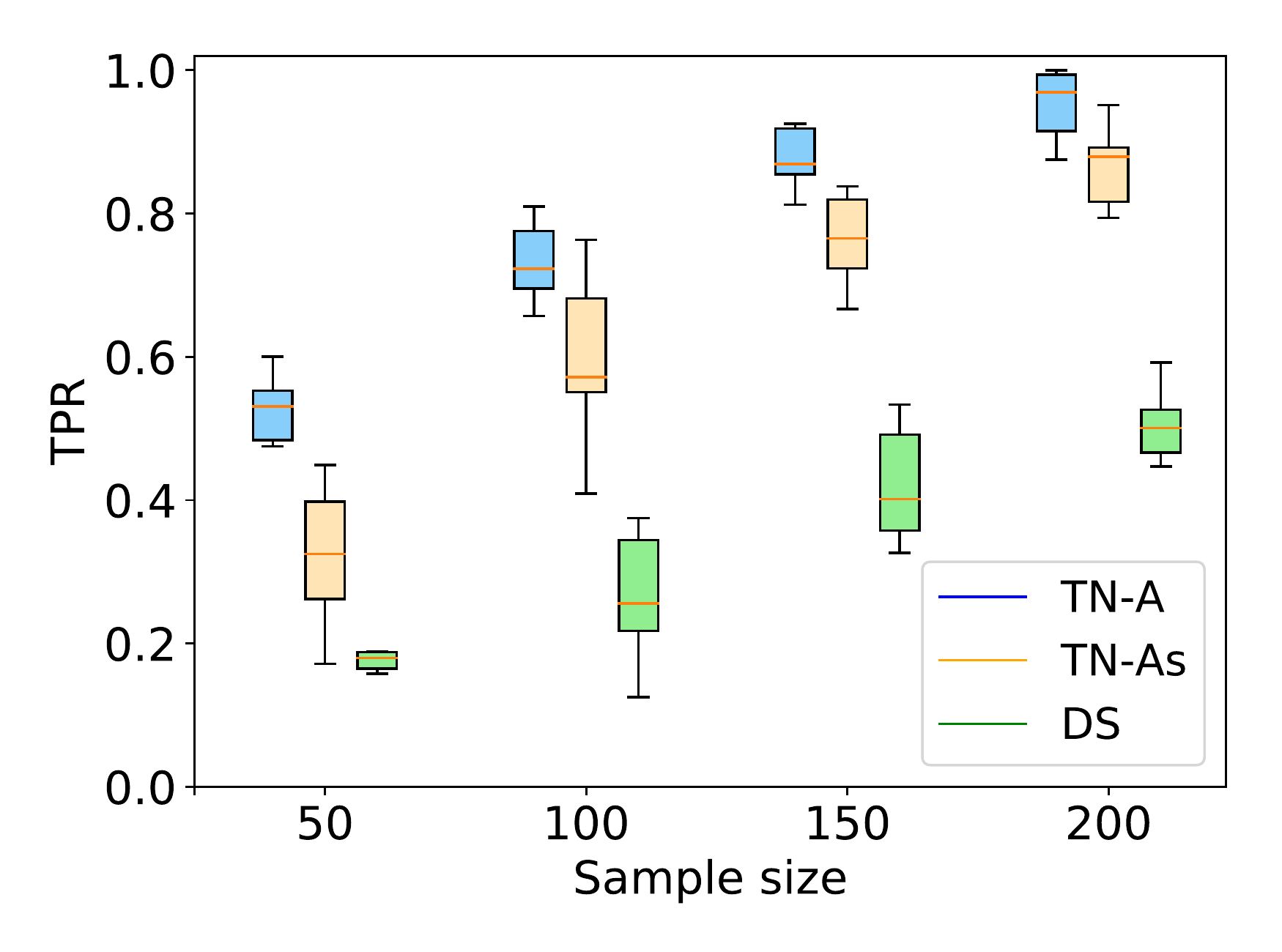} 
  \caption{TPR}
\end{subfigure}
\begin{subfigure}{.49\linewidth}
  \centering
  \includegraphics[width=\linewidth]{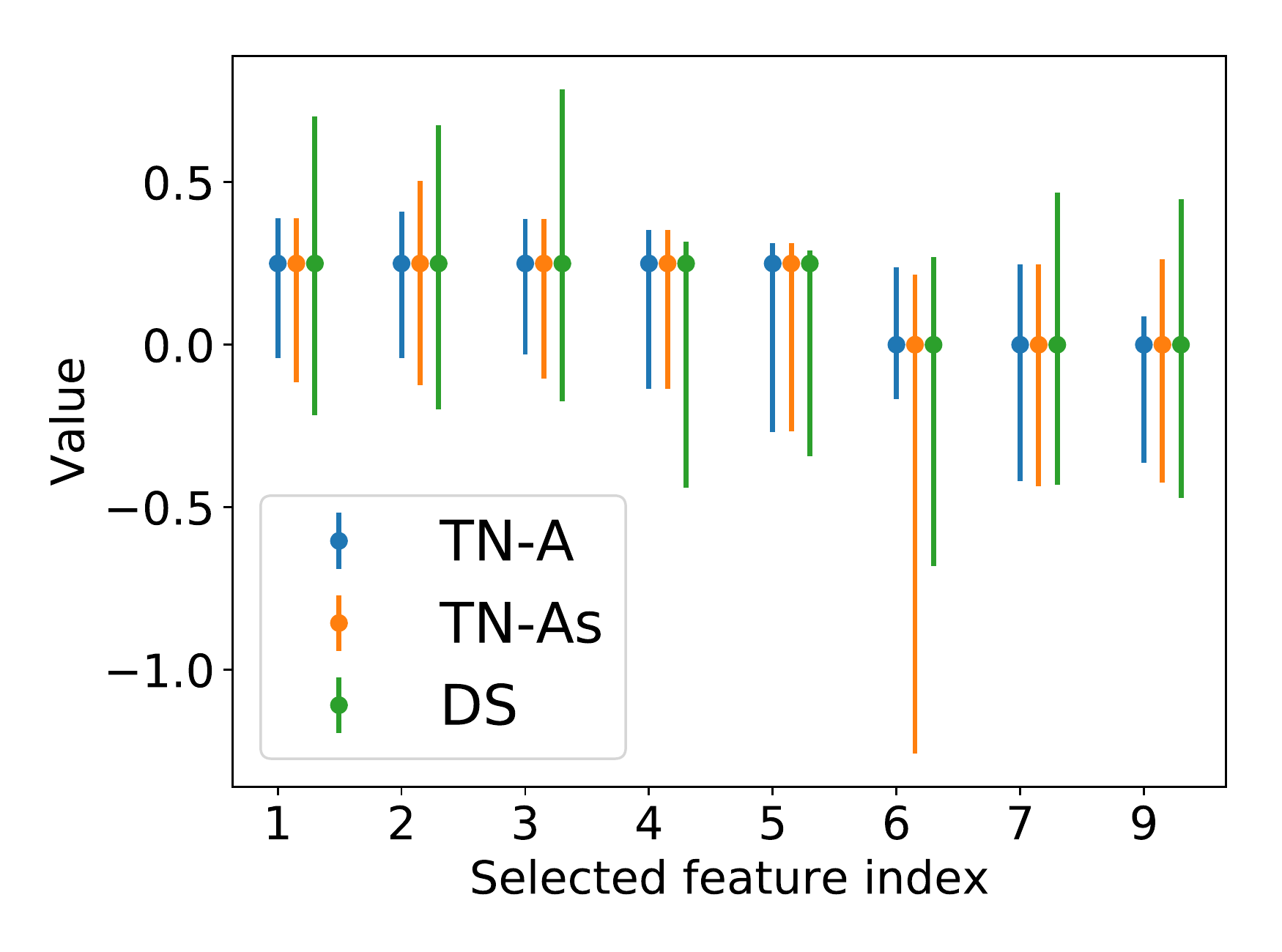}  
  \caption{CI demonstration}
\end{subfigure}
\begin{subfigure}{.49\linewidth}
  \centering
  \includegraphics[width=\linewidth]{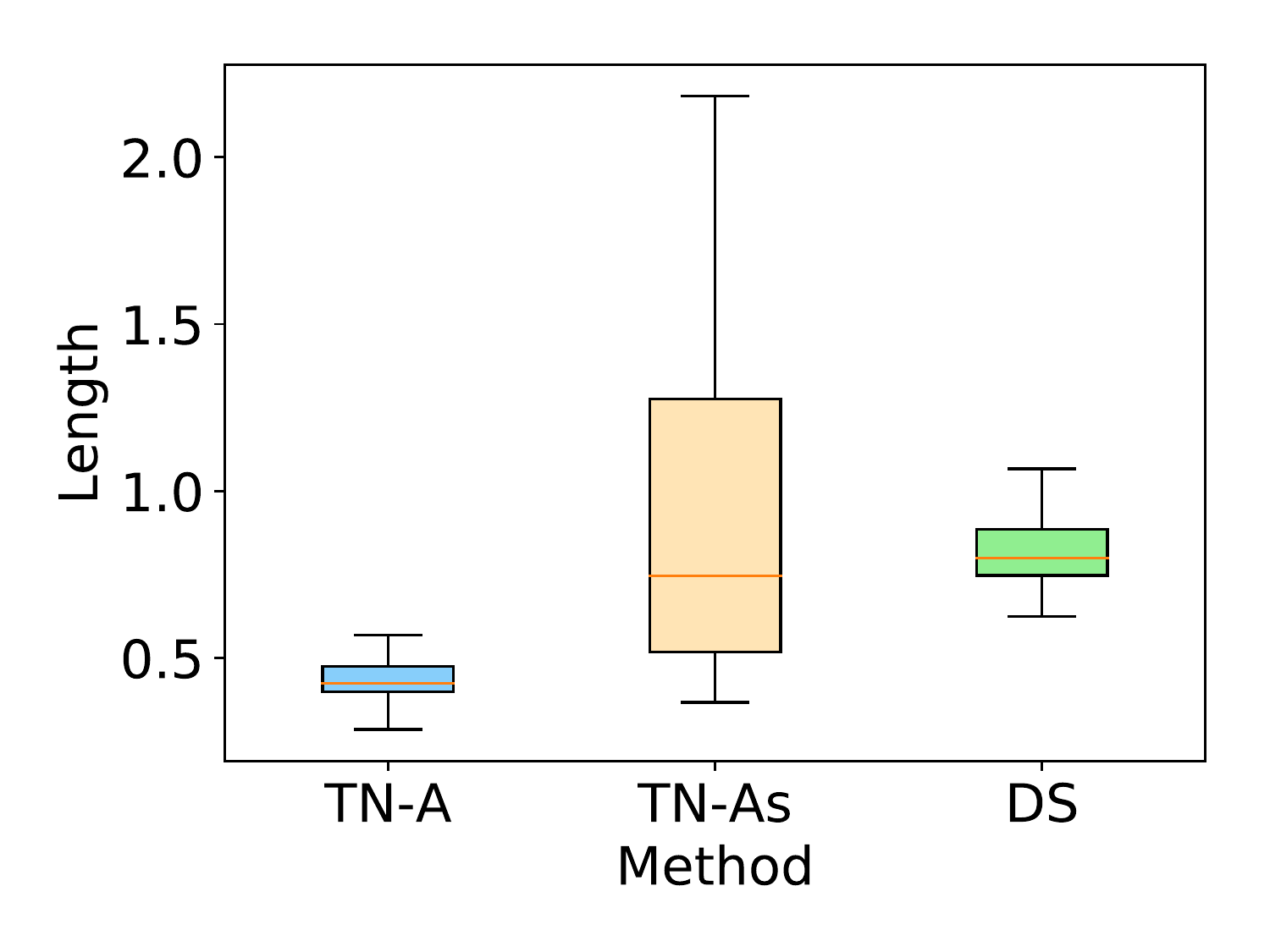}  
  \caption{Length of CI}
\end{subfigure}
\caption{Results of false positive rate (FPR) control, true positive rate (TPR) and confidence interval (CI).} 
\label{fig:fig_fpr_tpr_ci}
\end{figure}

\paragraph{The results of FPRs, TPRs and CIs.} The results of FPR and TPR are shown in Figures \ref{fig:fig_fpr_tpr_ci}a and \ref{fig:fig_fpr_tpr_ci}b.
In three cases, the FPRs are properly controlled under the significance level $\alpha$.
Regarding the TPR comparison, it is obvious that TN-A has the highest power.
In regard to CI experiments, we note that the number of selected features between Lasso and DS can be different.
Therefore, for a fair comparison, we only consider the features that are selected in both methods.
In our experiments, since 9 features were selected by the Lasso in the cases of TN-A and TN-As while only 8 features were selected in the case of DS, we only show the 95\% CI of the features that are selected in both cases in Figure \ref{fig:fig_fpr_tpr_ci}c.
The lengths of CI obtained by TN-A are almost the shortest.
We repeated this experiment 100 times and showed the boxplot of the lengths of the confidence intervals in Figure \ref{fig:fig_fpr_tpr_ci}d.
In summary, the CI results are consistent with the TPR results, i.e.,
TN-A has the shortest length of CI which indicates it has the highest power.




\begin{figure}[!t]
  \centering
  \includegraphics[width=.85\linewidth]{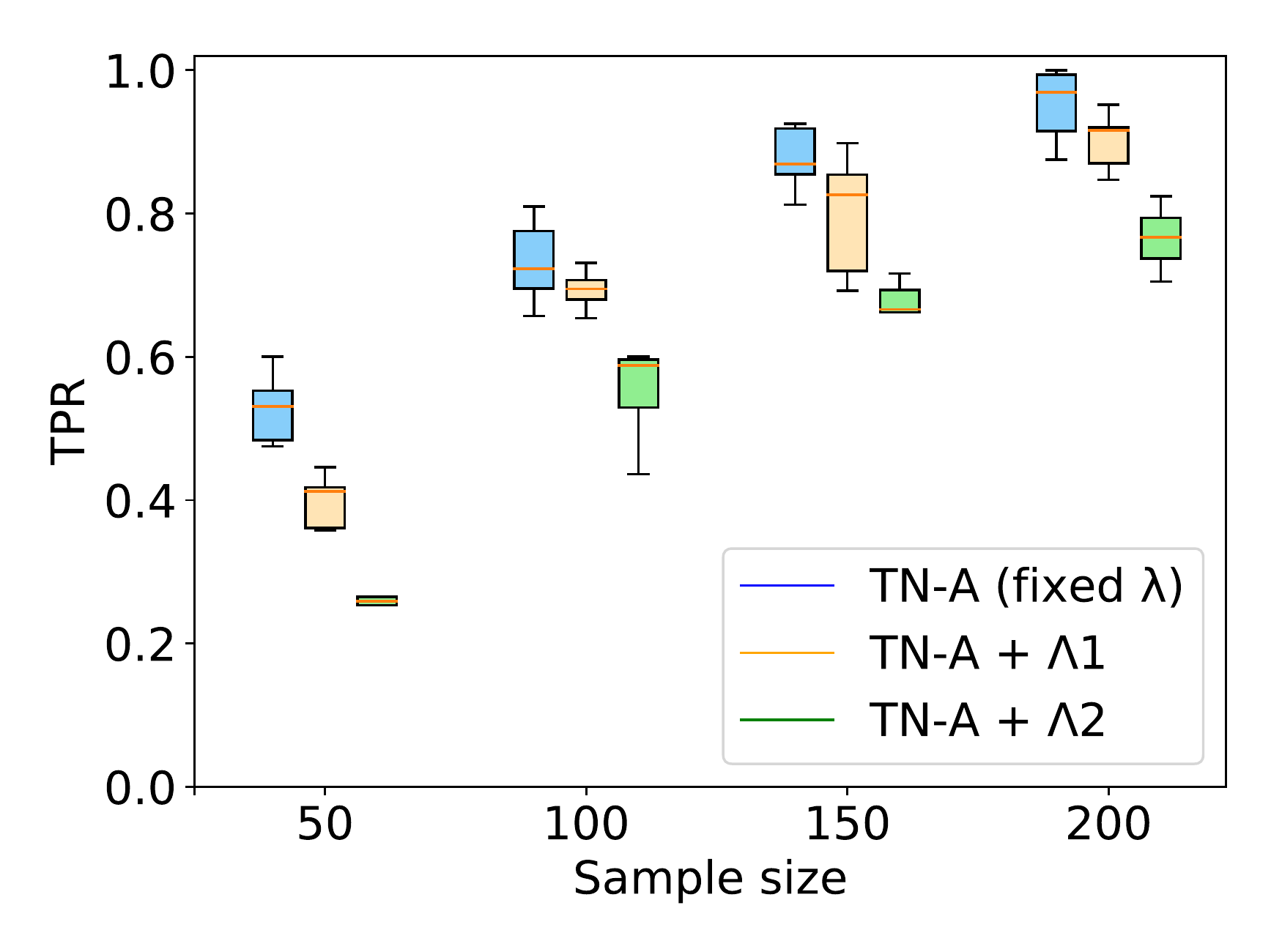}  
\caption{Demonstration of TPR when accounting cross-validation selection event.} 
\label{fig:tpr_cv}
\end{figure}

\paragraph{The results when accounting CV selection event.} We also demonstrate the TPRs and the lengths of CIs between the case when $\lambda = 2^0$ is fixed and $\lambda$ is selected from the set $\Lambda_1 = \{2^{-1}, 2^{0}, 2^{1}\}$ or $\Lambda_2 = \{2^{-10}, 2^{-9}, ..., 2^{9}, 2^{10}\}$.
%
%
We show that the TPR tends to decrease when increasing the size of $\Lambda$ as shown in Figure \ref{fig:tpr_cv}.
This is due to the fact that when we increase the size of $\Lambda$, we have to condition on more information which leads to shorter truncation interval and results low TPR.
The TPR results are consistent with the CI results shown in Figure \ref{fig:ci_cv} in which the length of CI is longer when increasing the size of $\Lambda$.
Besides, we also conducted TPR comparison between our method and the \emph{over-conditioning} version proposed in \cite{loftus2015selective}. 
The results are shown in Figure \ref{fig:fig_tpr_cv_para_oc}.
Our method has higher power since we can characterize minimum amount of conditioning.

\begin{figure}[t]
  \centering
  \includegraphics[width=.85\linewidth]{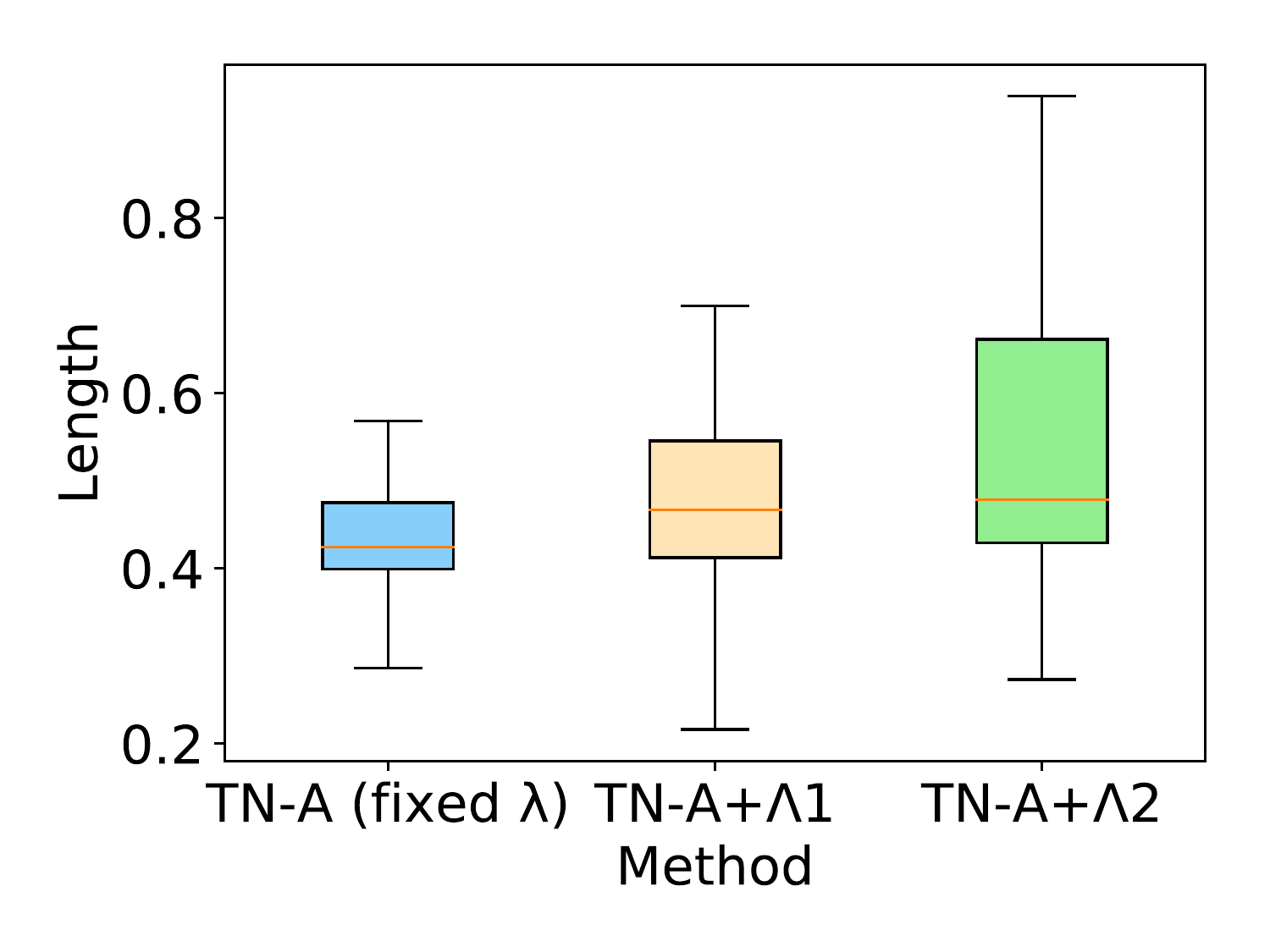}  
\caption{Demonstration of CI length when considering cross-validation selection event.} 
\label{fig:ci_cv}
\end{figure}

\begin{figure}[t]
\begin{subfigure}{.495\linewidth}
  \centering
  \includegraphics[width=\linewidth]{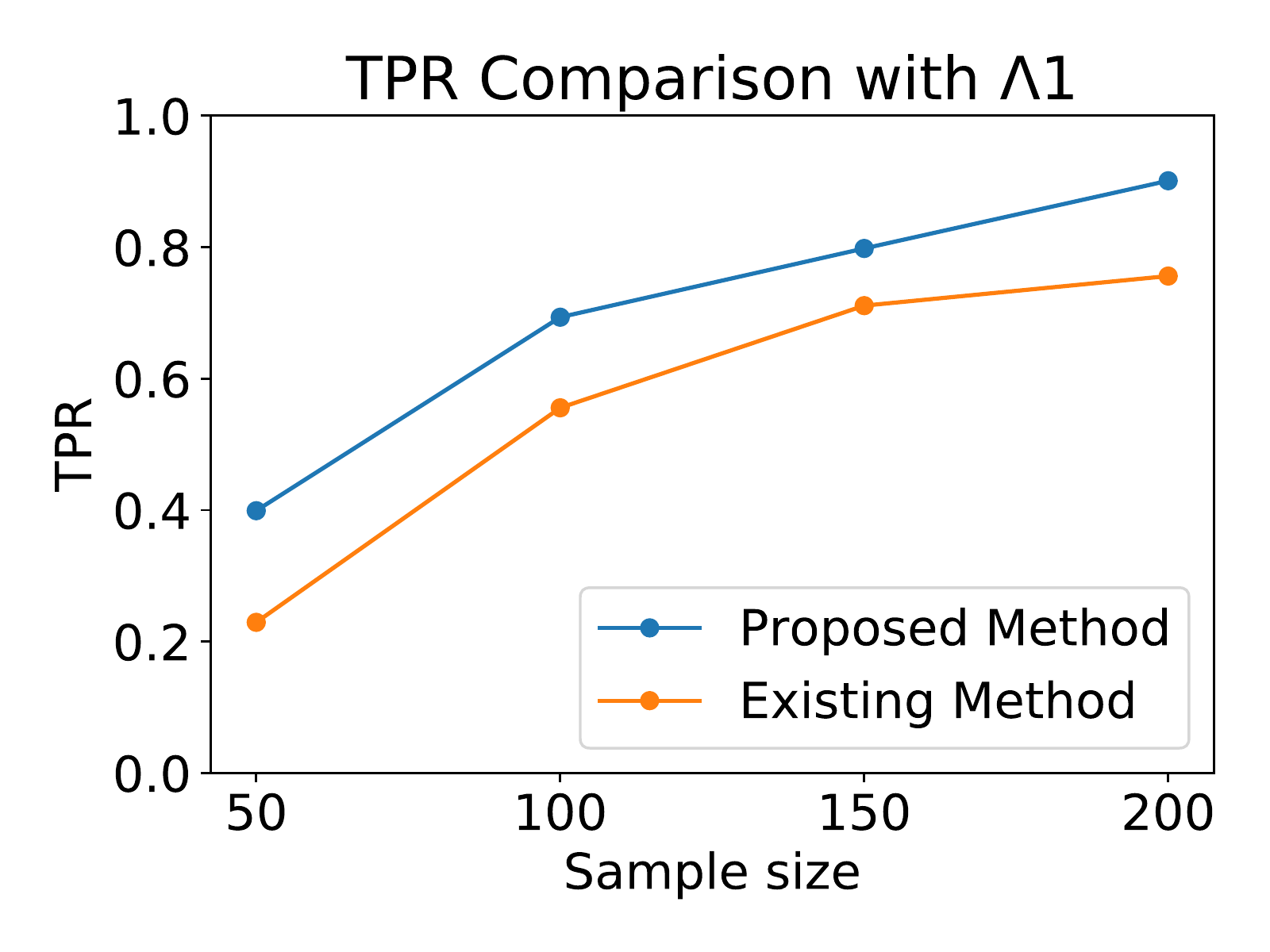}  
  \caption{$\Lambda_1 = \{2^{-1}, 2^0, 2^1\} $}
\end{subfigure}
\begin{subfigure}{.495\linewidth}
  \centering
  \includegraphics[width=\linewidth]{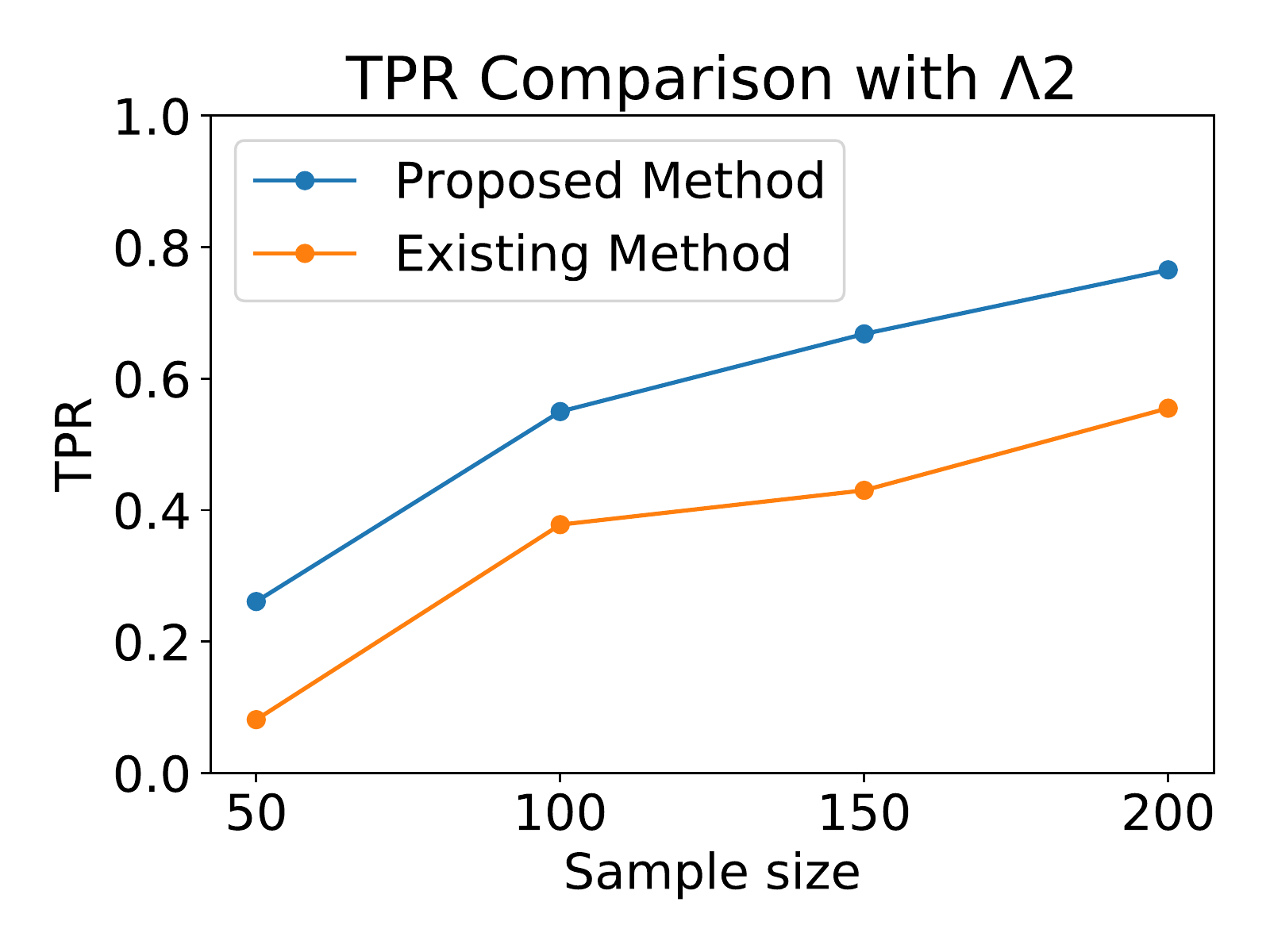}  
  \caption{$\Lambda_2 = \{2^{-10},..., 2^{10}\} $}
\end{subfigure}
\caption{TPR comparison with the existing method \citep{loftus2015selective} when accounting CV selection event.} 
\label{fig:fig_tpr_cv_para_oc}
\end{figure}


\paragraph{The efficiency of the proposed method.} In \cite{lee2016exact}, the authors mentioned the \emph{naive} way to remove sign conditioning by enumerating all possible combination of signs $2^{|\cA_{\rm obs}|}$ which is only feasible when $|\cA_{\rm obs}|$ is small.
On the left-hand side of Figure \ref{fig:cc}, we show the efficiency of our method compared to the naive way of removing sign conditioning.
On the right-hand side of Figure \ref{fig:cc}, the Lasso SI without conditioning on signs can be done even when $n=10,000$, $p=10,000$ and 
thousands of features are selected while the naive way can not finish the task in realistic time.
We also additionally show the efficiency of our method compare to two methods in  \cite{liu2018more}, which we call TN-$\ell_1$ and TN-Custom.
The details of these two methods are shown in Appendix \ref{ext:partial_target}.
In general, to perform these two methods, we still need to \emph{naively} enumerate all possible combinations of signs.
The results are shown in Figure \ref{fig:cc_vs_liu}.

\begin{figure}[t]
\begin{subfigure}{.47\linewidth}
  \centering
  \includegraphics[width=0.93\linewidth]{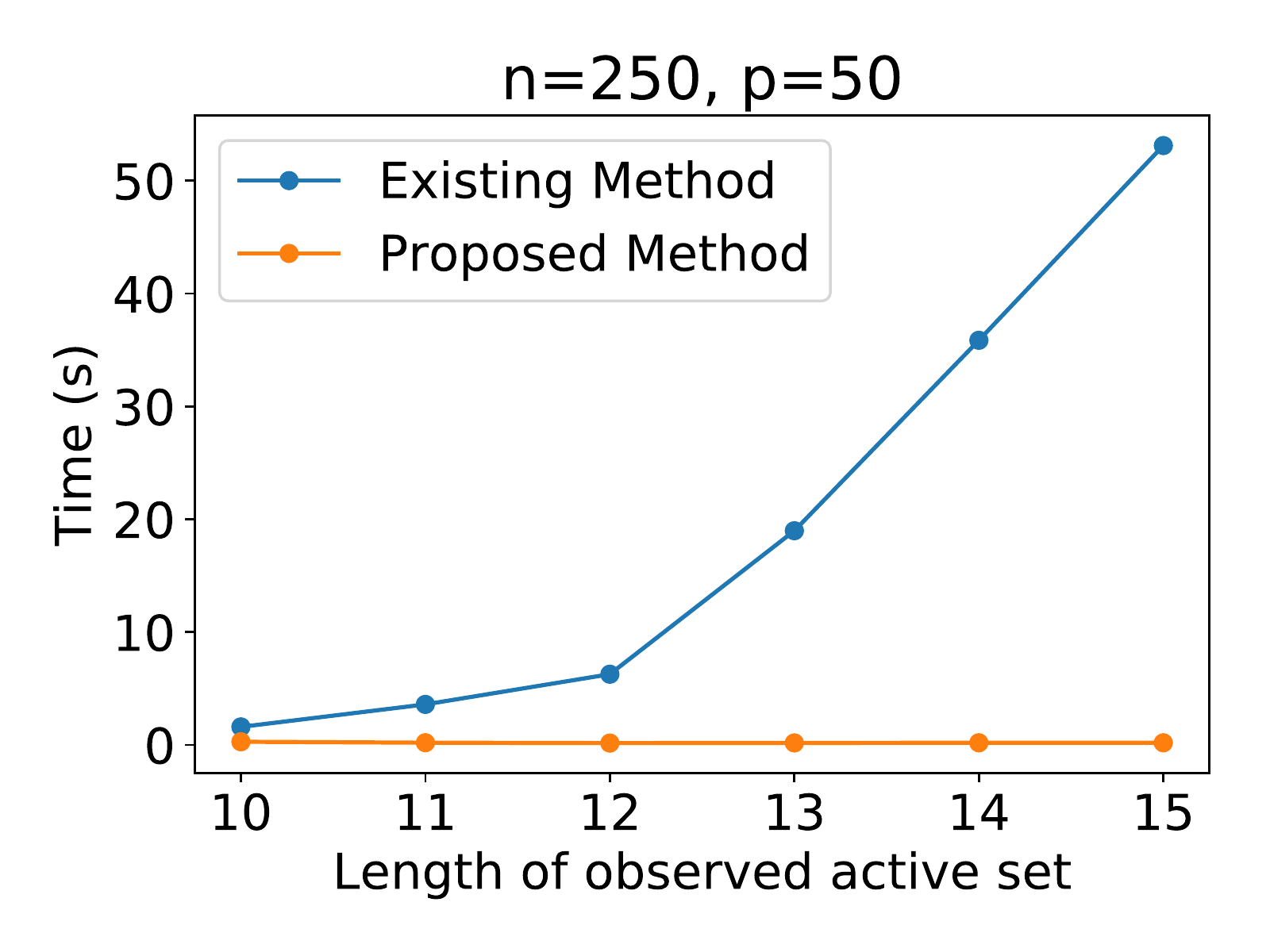}  
\end{subfigure}
\begin{subfigure}{.52\linewidth}
  \centering
  \includegraphics[width=\linewidth]{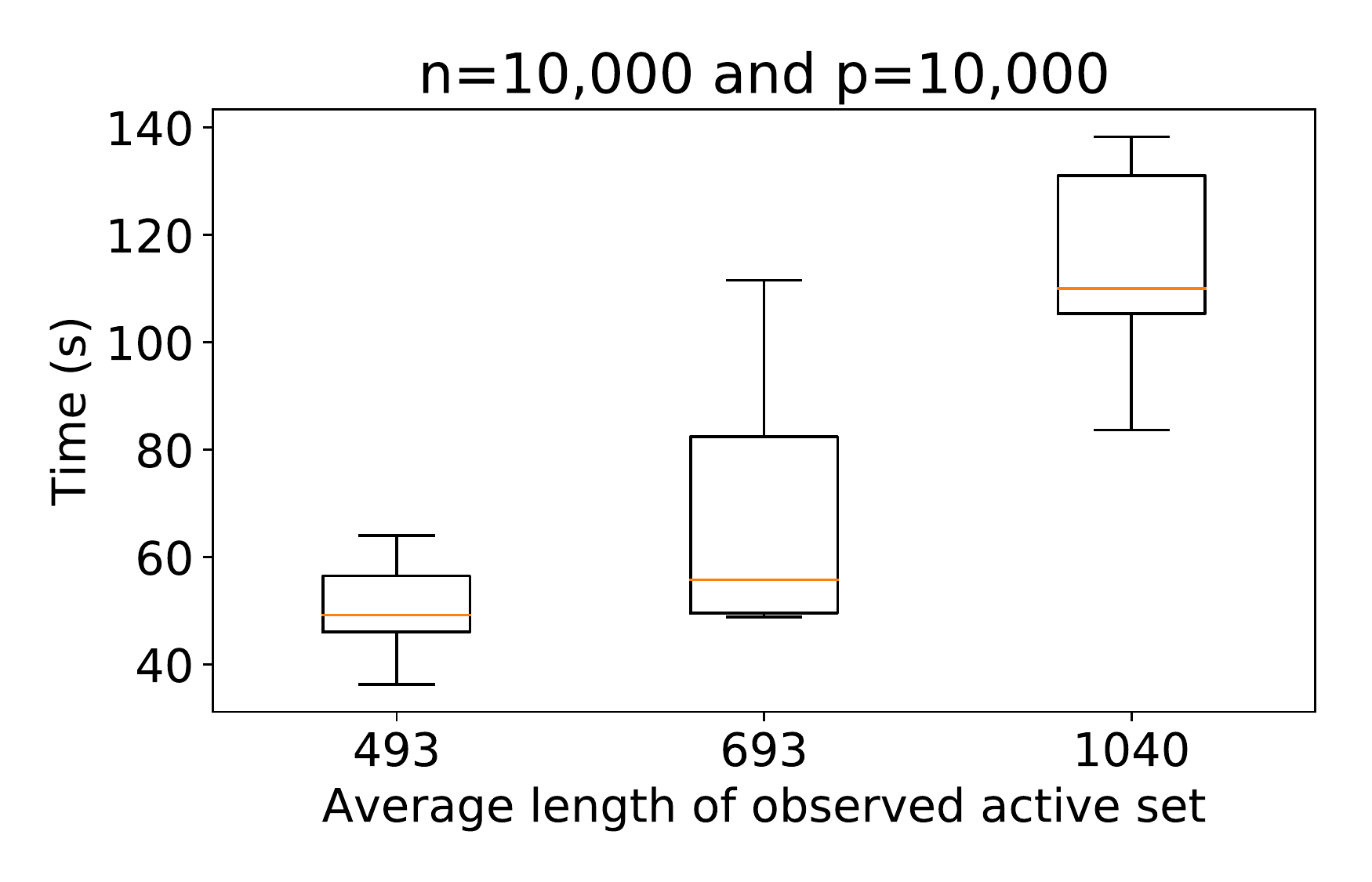} 
\end{subfigure}
\caption{Efficiency of the proposed method. With our method, Lasso SI without conditioning on signs can be done even when thousands of features are selected.} 
\label{fig:cc}
\end{figure}

\begin{figure}[t]
\begin{subfigure}{.49\linewidth}
  \centering
  \includegraphics[width=\linewidth]{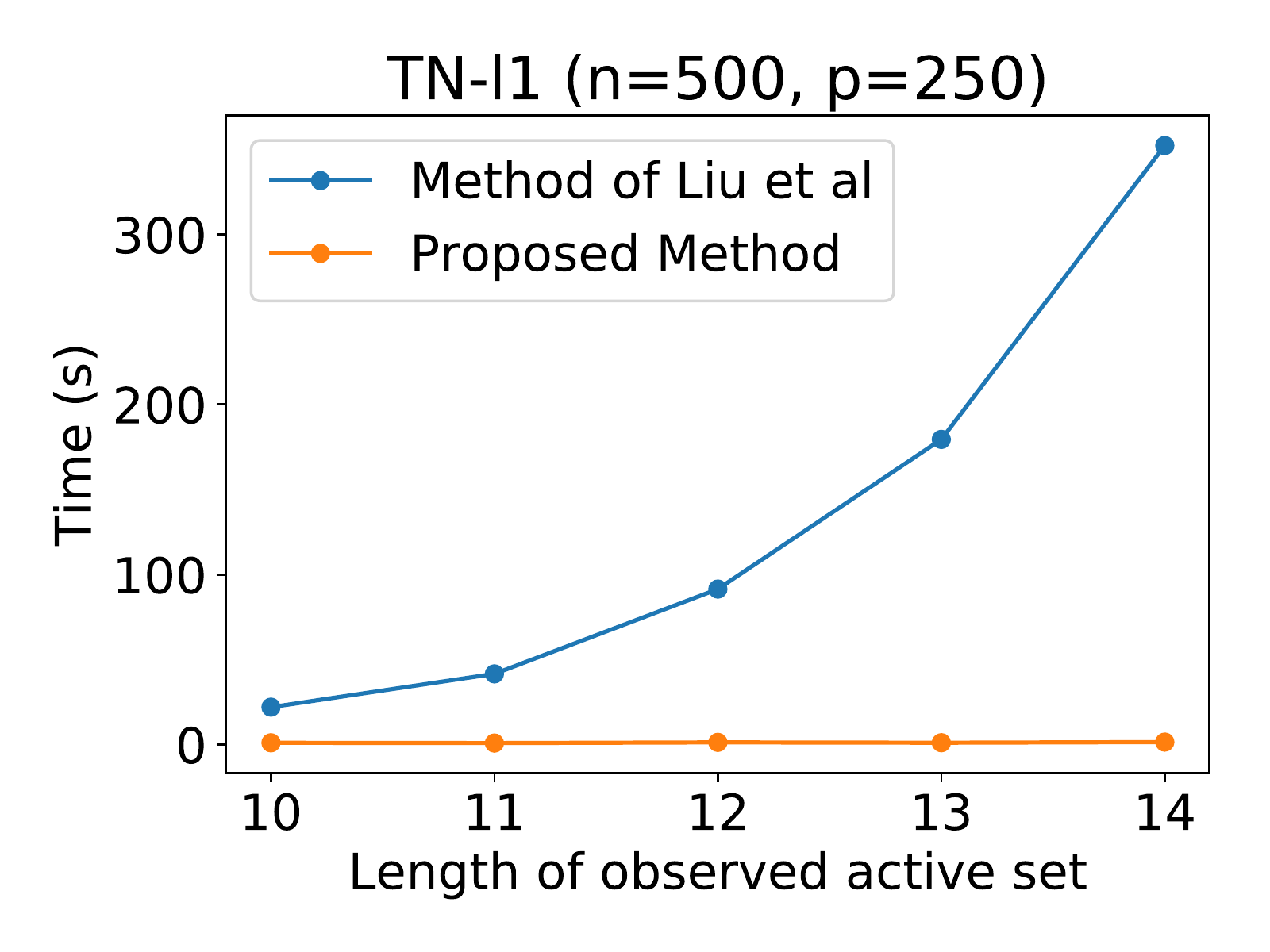}  
\end{subfigure}
\begin{subfigure}{.49\linewidth}
  \centering
  \includegraphics[width=\linewidth]{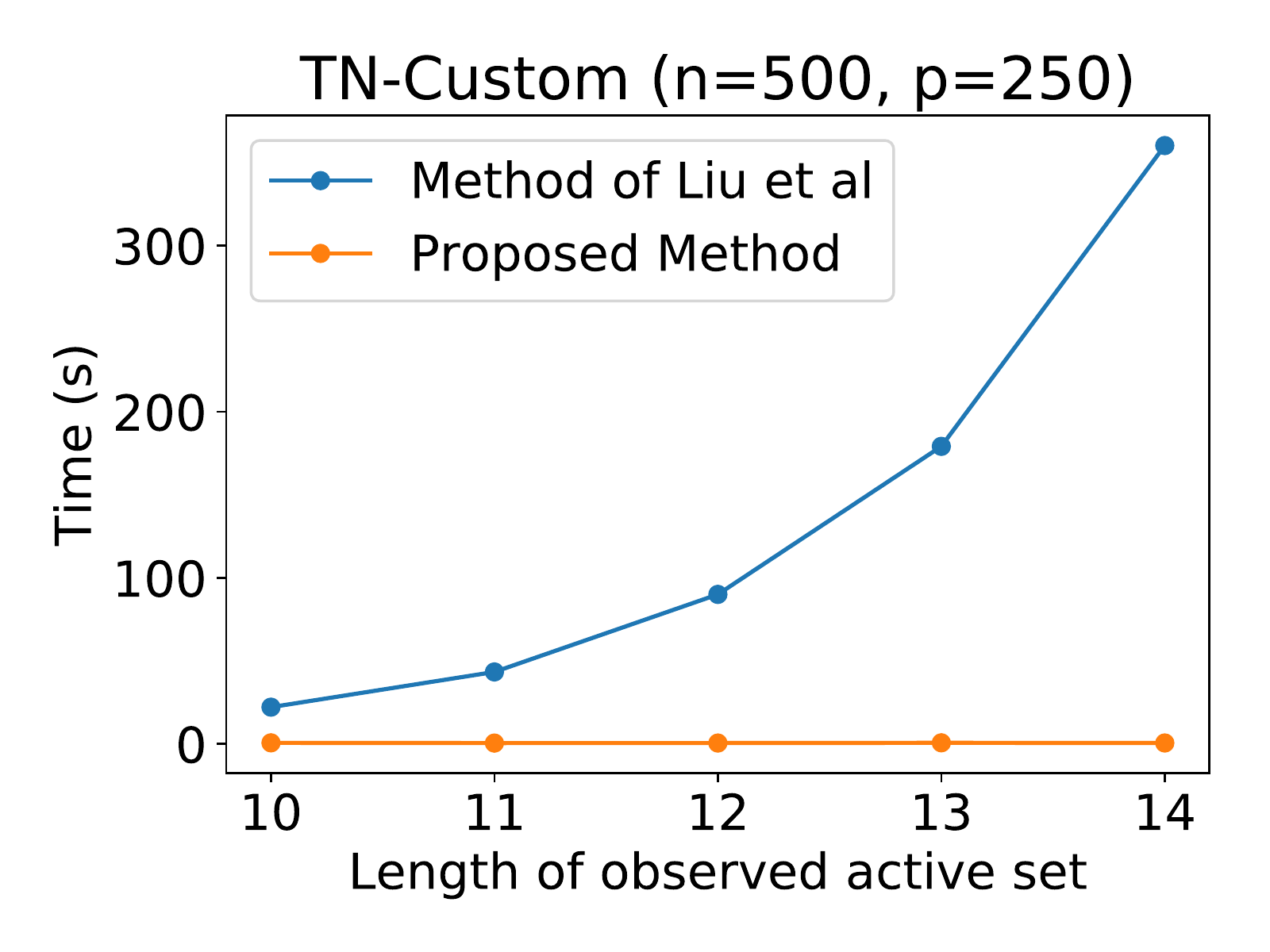} 
\end{subfigure}
\caption{Comparison between the proposed method and methods in  \cite{liu2018more}, in which an exponentially increasing number of all possible sign combinations are still required.} 
\label{fig:cc_vs_liu}
\end{figure}


\begin{figure}[!t]
  \centering
  \includegraphics[width=0.85\linewidth]{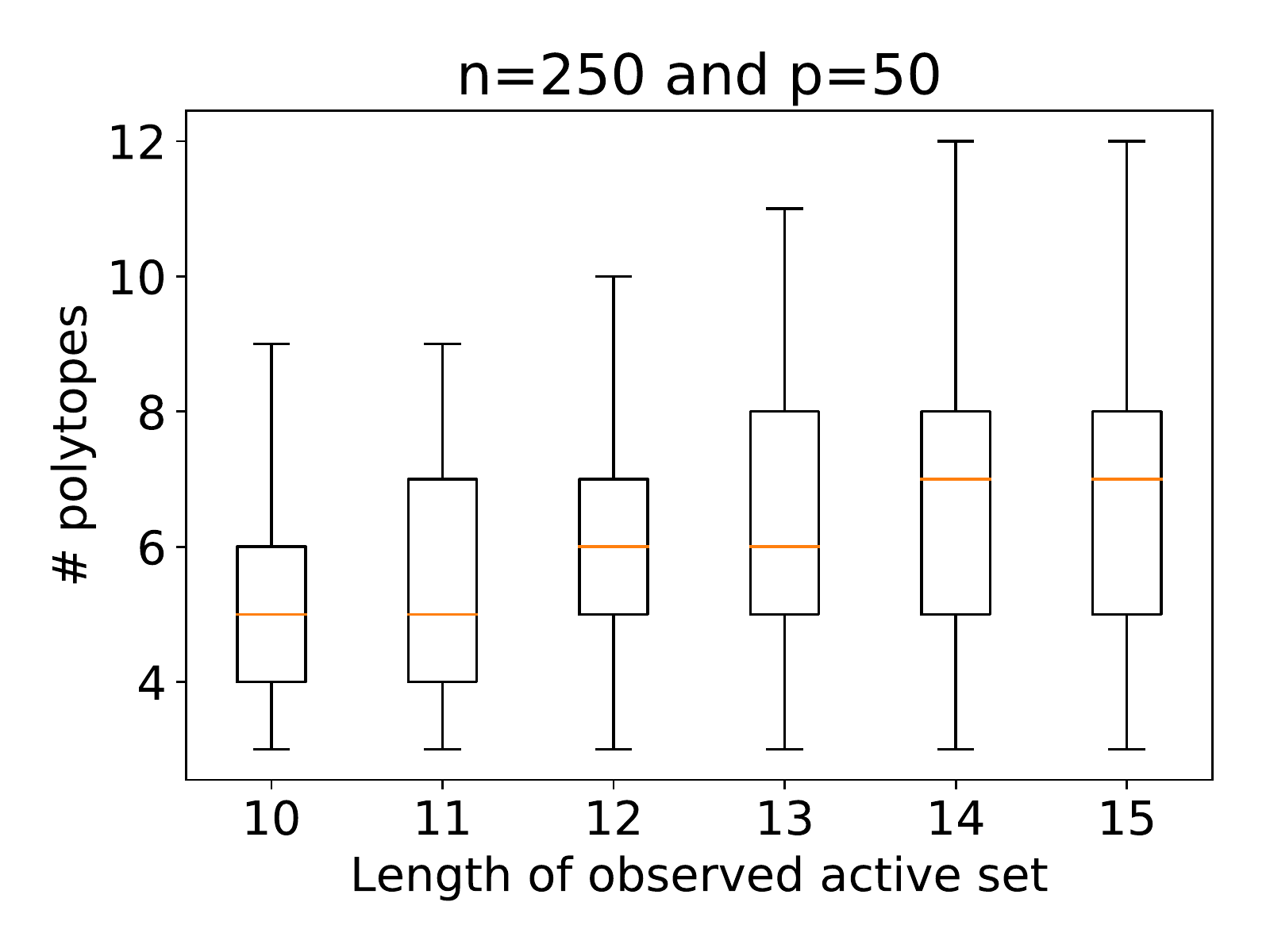}  
\caption{Number of encountered intervals on the line.} 
\label{fig:no_encounted}
\end{figure}

One might wonder how we can circumvent the computational bottleneck of exponentially increasing number of polytopes. 
Our experience suggests that, by focusing on the the line along the test-statistic in data space, we can skip majority of the polytopes that do not affect the truncated Normal sampling distribution because they do not intersect with this line.
In other words, we can skip majority of combinations of signs that never appear.

In Figure \ref{fig:no_encounted}, we show the boxplot of the actual number of intervals of $z$ that we encountered on the line when constructing the truncation region $\cZ$. 
This indicates that the number of polytopes intersecting the line $z$ that we need to consider is much smaller than $2^{|\cA_{\rm obs}|}$, which is considered in \cite{lee2016exact}---this is the reason why the proposed approach can resolve all major limitations of the current SI method, making Lasso SI more powerful and practical.

We did not compare the computational time between the proposed method TN-A and the over-conditioning version TN-As because TN-As is obviously faster than TN-A but it has lower power than TN-A.
Our main purpose is to demonstrate that the proposed method  not only has high statistical power but also has practically computational costs.

We note that, in the worst-case, the complexity of the proposed method still grows exponentially. 
This is a common issue in other parametric programming applications such as regularization paths. 
However, fortunately, it has been well-recognized that this worst case rarely happens in practice, and our experiments suggest that this also applies to PP-based SI.

\paragraph{The robustness of the proposed method in terms of the FPR control.} We demonstrate the robustness of our method in terms of the FPR control by considering the following cases:

$\bullet$ Non-normal noise:  we consider the noise following Laplace distribution, skew normal distribution (skewness coefficient 10), and $t_{20}$ distribution.

$\bullet$ Unknown $\sigma^2$: we also consider the case  when the variance is estimated from the data.

%
We generated $n$ outcomes as $y_i = \bm x_i^\top \bm \beta + \veps_i$, 
$i = 1, ..., n$, 
where 
$p = 5, \bm x_i \sim \NN(0, I_p)$, 
and $\veps_i$ follows Laplace distribution, skew normal distribution, or $t_{20}$ distribution with zero mean and standard deviation was set to 1.
In the case of estimated $\sigma^2$, $\veps_i \sim \NN(0, 1)$.
We set all elements of $\bm \beta$ to 0, and set $\lambda = 0.5$.
For each case, we ran 1,200 trials for each $n \in \{100, 200, 300, 400\}$.
We confirmed that our method still maintains good performance on FPR control. The results are shown in Appendix \ref{appendix:exp_details}.

\subsection{Results on Real-World Datasets}

We demonstrate the efficiency of the proposed method by applying it on high-dimensional real-world bioinformatics related datasets, which is available at \url{http://www.coepra.org/CoEPrA_regr.html}.
In datasets 1 and 3, $n$ is the number of nona-peptides. 
Each amino acid in a nona-peptide is described by 643 descriptors, for a total of $p = 643 \times 9 = 5787$ descriptors.
In dataset 2, $n$ is the number of octa-peptides. 
Each amino acid in a octa-peptide is described by 643 descriptors, for a total of $p = 643 \times 8 = 5144$ descriptors.
For these experiments, we used elastic net instead of Lasso to obtain large $\cA_{\rm obs}$.
The extension of the proposed method for elastic net is presented in Appendix \ref{ext:elastic_net}.
The results are shown in Table \ref{tab:real_world}.
The time shown in the table is the average time to compute $p$-value for a selected feature.

\begin{table}[!t]
\caption{Results on high-dimensional real-world bioinformatics related datasets.} \label{tab:real_world}
\begin{center}
\renewcommand{\arraystretch}{1.5}
\begin{tabular}{ |c|c|c|c|c| } 
 \hline
  & $n$ & $p$ & $|\cA_{\rm obs}|$ & Avg. Time (s) \\ 
 \hline
 \hline
 Dataset 1& 89 & 5787 & \textbf{600} & 0.374 \\ 
 \hline
 Dataset 2& 76 & 5144 & \textbf{621} & 0.344 \\ 
 \hline
 Dataset 3& 133 & 5787 & \textbf{660} & 0.342 \\ 
 \hline
\end{tabular}
\end{center}
\end{table}

\section{Conclusion}
In this paper, we have proposed a general method for characterizing the selection event of Lasso SI by introducing piecewise-linear parametric programing approach.
With the proposed method, we can conduct a powerful SI by conditioning only on the selected features without the need of enumerating all possible sign vectors.
Besides, we also introduced a new way to charactering the cross-validation based tuning parameter selection.
The proposed method not only overcomes the drawbacks of current Lasso SI methods but also improves the performance and practicality of SI for Lasso in various respects. 
Our idea is general and can be applied to circumvent several drawbacks of all the methods that are based on the current SI framework.
%
%
We conducted experiments on both synthetic and real-world datasets to demonstrate the effectiveness and efficiency of our proposed method.


\subsubsection*{Acknowledgements}
This work was partially supported by MEXT KAKENHI (20H00601, 16H06538), JST CREST (JPMJCR1502), RIKEN Center for Advanced Intelligence Project, and RIKEN Junior Research Associate Program.

\bibliographystyle{abbrvnat}
\bibliography{ref}

\newpage
\onecolumn

\section{Appendix}

\subsection{Detailed Proof for Lemma \ref{lemma:transition_point}} \label{appendix:proof_lemma_2}

From Equation (\ref{eq:lemma1_eq1}), we can see that $\hat{\bm \beta}_{\cA_z}(z)$ is a function of $z$.
For a real value $z$, there exists $t^1_{z}$ such that for any real value $z^\prime$ in $[z, z + t^1_{z})$, all elements of $\hat{\bm \beta}_{\cA_{z^\prime}}(z^\prime)$ remain the same signs with $\hat{\bm \beta}_{\cA_z}(z)$.
Similarly, from Equation (\ref{eq:lemma1_eq2}), we can see that $\bm s_{\cA^c_z}(z)$ is a function of $z$.
Then, for a real value $z$, there exists $t^2_{z}$ such that for any real value $z^\prime$ in $[z, z + t^2_{z})$, all elements of $\bm s_{\cA^c_{z^\prime}}(z^\prime)$ are smaller than 1 in absolute value.
Finally, by taking $t_z = \min \{t^1_{z}, t^2_{z}\}$, we obtain the interval in which the active set and signs of lasso solution remain the same.
The remaining task is to compute $t^1_{z}$ and $t^2_{z}$.

We first show how to derive $t^1_{z}$. From Equation (\ref{eq:lemma1_eq1}), we have
\begin{align*}
	\hat{\bm \beta}_{\cA_z}(z^\prime) - \hat{\bm \beta}_{\cA_z}(z) 
	= 
	{\bm \psi}_{\cA_z}(z) \times (z^\prime - z).
\end{align*}
To guarantee $\hat{\bm \beta}_{\cA_z}(z^\prime)$ and $\hat{\bm \beta}_{\cA_z}(z) $ have the same signs, 
\begin{align} \label{appendix_eq:lemma_2_proof_c1}
	s_j (z^\prime) = s_j (z), \quad \forall j \in \cA_z.
\end{align}
For a specific $j \in \cA_z$, we consider the following cases:
\begin{itemize}
	\item If $\hat{\beta}_j(z) > 0$, then $\hat{\beta}_j(z^\prime) =\hat{\beta}_j(z) + {\psi}_j(z) \times (z^\prime - z) > 0$.
	\begin{itemize}
		\item If ${\psi}_j(z) > 0$, then $ z^\prime - z > - \frac{ \hat{\beta}_j(z)}{{\psi}_j(z) }$ (This inequality always holds since the left hand side is positive while the right hand side is negative).
		\item If ${\psi}_j(z) < 0$, then $ z^\prime - z < - \frac{ \hat{\beta}_j(z)}{{\psi}_j(z)}$.
	\end{itemize}

	\item If $\hat{\beta}_j(z) < 0$, then $\hat{\beta}_j(z^\prime) =\hat{\beta}_j(z) + {\psi}_{j}(z) \times (z^\prime - z) < 0$.
	\begin{itemize}
		\item If ${\psi}_j(z) > 0$, then $ z^\prime - z < - \frac{ \hat{\beta}_j(z)}{{\psi}_j(z)}$.
		\item If ${\psi}_j(z) < 0$, then $ z^\prime - z > - \frac{ \hat{\beta}_j(z)}{{\psi}_j(z)}$ (This inequality always holds since the left hand side is positive while the right hand side is negative).
	\end{itemize}
\end{itemize}
Finally, for satisfying the condition in Equation (\ref{appendix_eq:lemma_2_proof_c1}),
\begin{align*}
	z^\prime - z < \min \limits_{j \in \cA_z} \left( - \frac{\hat{\beta}_j(z)}{\psi_j(z)} \right)_{++}  = t^1_{z}.
\end{align*}

We next show how to derive $t^2_{z}$. From Equation (\ref{eq:lemma1_eq2}), we have
\begin{align*}
	\lambda {\bm s}_{\cA^c_z}(z^\prime) - \lambda {\bm s}_{\cA^c_z}(z) 	
	=
	{\bm \gamma}_{\cA^c_z}(z) \times (z^\prime - z).
\end{align*}
To guarantee $\|\lambda {\bm s}_{\cA^c_z}(z^\prime)\|_{\infty} = \|\lambda {\bm s}_{\cA^c_z}(z) + {\bm \gamma}_{\cA^c_z}(z) \times (z^\prime - z)\|_{\infty} < \lambda$,
\begin{align} \label{appendix_eq:lemma_2_proof_c2}
	-\lambda < \lambda s_j(z) + \gamma_j(z) \times (z^\prime - z) < \lambda, \quad \forall j \in \cA^c_z.
\end{align}
For a specific $j \in \cA^c_z$, we have the following cases:
\begin{itemize}
	\item If $\gamma_j(z) > 0$, then 
	$\frac{-\lambda - \lambda s_j(z)} {\gamma_j(z)} < z^\prime - z < \frac{\lambda - \lambda s_j(z)} {\gamma_j(z)}$.
	\item If $\gamma_j(z) < 0$, then 
	$\frac{\lambda - \lambda s_j(z)} {\gamma_j(z)} < z^\prime - z < \frac{-\lambda - \lambda s_j(z)} {\gamma_j(z)}$.
\end{itemize}
Note that the first inequalities of the above two cases always hold since the left hand side 
is negative while the right hand side is positive).
Then, for satisfying the condition in Equation (\ref{appendix_eq:lemma_2_proof_c2}),
\begin{align*}
	z^\prime - z < \min \limits_{j \in \cA^c_z} \left( \lambda \frac{{\rm sign}(\gamma_j(z)) - s_j(z)}{\gamma_j(z)} \right)_{++} = t^2_{z}.
\end{align*}
Finally, we can compute $t_z$ by taking $t_z = \min \left \{ t^1_{z}, t^2_{z} \right \}$.


\subsection{Derivations of the Proposed Method for Various Settings}
\subsubsection{Elastic Net} \label{ext:elastic_net}
In some cases, the lasso solutions are unstable. 
One way to stabilize them is to add an $\ell_2$ penalty to the objective function, resulting in the elastic net \citep{zou2005regularization}.
Therefore, we extend our proposed method and provide detailed derivation for testing the selected features in elastic net case.
We now consider the optimization problem with parametrized response vector ${\bm y}(z)$ for $z \in \RR$ as follows
\begin{equation} \label{eq:parametric_elastic_net}
	\hat{{\bm \beta}}(z) = \argmin \limits_{{\bm \beta} \in \RR^p} \frac{1}{2n} \|{\bm y}(z) - X {\bm \beta}\|^2_2 + \lambda \|{\bm \beta}\|_1 + \frac{1}{2} \delta \|{\bm \beta}\|^2_2.
\end{equation}
For any $z$ in $\RR$, the optimality condition is given by
\begin{align} \label{eq:opt_condition_en}
	\frac{1}{n} X^\top\left ( X \hat{{\bm \beta}}(z) - {\bm y}(z) \right ) + \lambda {\bm s}(z) + \delta \hat{{\bm \beta}}(z)= 0, \ {\bm s}(z) \in \partial \|\hat{{\bm \beta}}(z)\|_1.
\end{align}

Similar to lasso case, to construct the truncation region $\cZ$, we have to 1) compute the entire path of $\hat{{\bm \beta}}(z)$ in Equation (\ref{eq:parametric_elastic_net}), and 2) identify a set of intervals of $z$ on which $\cA({\bm y}(z)) = \cA({\bm y}^{\rm obs})$.

\begin{lemma}
\label{lemma:piecewise_linear_elastic_net}
Let us consider two real values $z^\prime$ and $z$ $(z^\prime > z)$. 
If $\hat{\bm \beta}_{\cA_z}(z)$  and $\hat{\bm \beta}_{\cA_{z^\prime}}(z^\prime)$ have the same active set and the same signs, then we have
\begin{align} 
	\hat{\bm \beta}_{\cA_z}(z^\prime) - \hat{\bm \beta}_{\cA_z}(z) 
	&= 
	{\bm \psi}_{\cA_z}(z) \times (z^\prime - z), \label{eq:lemma1_eq1_en}\\
	\lambda {\bm s}_{\cA^c_z}(z^\prime) - \lambda {\bm s}_{\cA^c_z}(z) 	&=
	{\bm \gamma}_{\cA^c_z}(z) \times (z^\prime - z), \label{eq:lemma1_eq2_en}
\end{align}
where 
${\bm \psi}_{\cA_z}(z) = (X^\top_{\cA_z} X_{\cA_z} + n \delta I_{|\cA_z|})^{-1} X^\top_{\cA_z} {\bm b}$, 
and 
${\bm \gamma}_{\cA^c_z}(z) = \frac{1}{n} ( X^\top_{\cA^c_z} {\bm b} - X^\top_{\cA^c_z} X_{\cA_z} {\bm \psi}_{\cA_z}(z) )$.
\end{lemma}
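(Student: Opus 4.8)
The plan is to mimic the proof of Lemma~\ref{lemma:piecewise_linear} essentially verbatim, with the KKT system of the elastic net (Equation~\eqref{eq:opt_condition_en}) replacing that of the Lasso. First I would split the optimality condition~\eqref{eq:opt_condition_en} into its active block (indexed by $\cA_z$) and its inactive block (indexed by $\cA^c_z$), exactly as was done in Equations~\eqref{eq:proof_eq1}--\eqref{eq:proof_eq4}. For the active block, using $\hat{\bm \beta}_{\cA^c_z}(z) = \bm 0$, the stationarity equation reads
\begin{align*}
	\frac{1}{n} X^\top_{\cA_z} X_{\cA_z} \hat{\bm \beta}_{\cA_z}(z) - \frac{1}{n} X^\top_{\cA_z} {\bm y}(z) + \lambda {\bm s}_{\cA_z}(z) + \delta \hat{\bm \beta}_{\cA_z}(z) = 0,
\end{align*}
i.e. $\big(\tfrac{1}{n}X^\top_{\cA_z} X_{\cA_z} + \delta I_{|\cA_z|}\big)\hat{\bm \beta}_{\cA_z}(z) = \tfrac1n X^\top_{\cA_z}{\bm y}(z) - \lambda {\bm s}_{\cA_z}(z)$. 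Note that the matrix $X^\top_{\cA_z} X_{\cA_z} + n\delta I_{|\cA_z|}$ is \emph{always} invertible for $\delta > 0$ since it is positive definite, so — unlike the Lasso case — no general-position or invertibility hypothesis is needed, which is why the statement of Lemma~\ref{lemma:piecewise_linear_elastic_net} drops those assumptions.

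Next I would write the same equation at $z^\prime$, invoke the hypothesis that $\cA_{z^\prime} = \cA_z$ and ${\bm s}_{\cA_z}(z^\prime) = {\bm s}_{\cA_z}(z)$ (same active set and same signs), and subtract. The sign terms cancel, and using ${\bm y}(z^\prime) - {\bm y}(z) = {\bm b}(z^\prime - z)$ from the parametrization $\bm y(z) = \bm a + \bm b z$, I get
\begin{align*}
	\big(X^\top_{\cA_z} X_{\cA_z} + n\delta I_{|\cA_z|}\big)\big(\hat{\bm \beta}_{\cA_z}(z^\prime) - \hat{\bm \beta}_{\cA_z}(z)\big) = X^\top_{\cA_z}{\bm b}\,(z^\prime - z),
\end{align*}
and multiplying through by the inverse yields Equation~\eqref{eq:lemma1_eq1_en} with ${\bm \psi}_{\cA_z}(z) = (X^\top_{\cA_z} X_{\cA_z} + n\delta I_{|\cA_z|})^{-1} X^\top_{\cA_z} {\bm b}$, exactly as claimed. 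For the inactive block, the stationarity condition gives $\lambda {\bm s}_{\cA^c_z}(z) = \tfrac1n\big(X^\top_{\cA^c_z}{\bm y}(z) - X^\top_{\cA^c_z} X_{\cA_z}\hat{\bm \beta}_{\cA_z}(z)\big)$ (the ridge term vanishes on the inactive coordinates since $\hat{\bm \beta}_{\cA^c_z}(z)=\bm 0$); writing this at $z$ and $z^\prime$, subtracting, and substituting Equation~\eqref{eq:lemma1_eq1_en} for the difference $\hat{\bm \beta}_{\cA_z}(z^\prime) - \hat{\bm \beta}_{\cA_z}(z)$ produces Equation~\eqref{eq:lemma1_eq2_en} with ${\bm \gamma}_{\cA^c_z}(z) = \tfrac1n\big(X^\top_{\cA^c_z}{\bm b} - X^\top_{\cA^c_z} X_{\cA_z}{\bm \psi}_{\cA_z}(z)\big)$.

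Honestly, there is no real obstacle here: the argument is a routine adaptation of Lemma~\ref{lemma:piecewise_linear}, and the only points requiring any care are (i) keeping track of the $\tfrac1n$ scaling in the elastic net objective~\eqref{eq:parametric_elastic_net} so that the constants in ${\bm \psi}$ and ${\bm \gamma}$ come out matching the statement (note the $n\delta$ inside the inverse versus the $\tfrac1n$ prefactor in ${\bm \gamma}$), and (ii) observing that the added $\delta\|\bm\beta\|_2^2$ term affects only the active block of the KKT system and merely shifts $X^\top_{\cA_z} X_{\cA_z}$ to $X^\top_{\cA_z} X_{\cA_z} + n\delta I_{|\cA_z|}$, which is automatically invertible. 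One could also remark that, as a consequence, the transition-point computation of Lemma~\ref{lemma:transition_point} carries over unchanged with these new ${\bm \psi}, {\bm \gamma}$, but that is beyond what the stated lemma requires.
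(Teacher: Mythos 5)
Your proof is correct and follows essentially the same route as the paper's: split the elastic net KKT conditions into the active and inactive blocks, subtract the conditions at $z$ and $z^\prime$ using the same-active-set/same-signs hypothesis, and use the linearity of ${\bm y}(z)$ to obtain both identities, including the correct $n\delta I_{|\cA_z|}$ shift and the $\tfrac{1}{n}$ factor in ${\bm \gamma}_{\cA^c_z}(z)$. Your added remark that $X^\top_{\cA_z} X_{\cA_z} + n\delta I_{|\cA_z|}$ is automatically invertible for $\delta>0$ is a small bonus the paper leaves implicit, but it does not change the argument.
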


\begin{proof}
From the optimality conditions of the elastic net (\ref{eq:opt_condition_en}) , we have 
\begin{align}
	(X^\top_{\cA_z} X_{\cA_z} + n \delta I_{|\cA_z|})\ \hat{\bm \beta}_{\cA_z}(z) 
	- X^\top_{\cA_z} {\bm y}(z) 
	+ n \lambda {\bm s}_{\cA_z}(z) = 0, 			
	\label{eq:proof_eq1_en} \\
	(X^\top_{\cA_{z^\prime}} X_{\cA_{z^\prime}} + n \delta I_{|\cA_{z^\prime}|})\ \hat{\bm \beta}_{\cA_{z^\prime}}(z^\prime) 
	- X^\top_{\cA_{z^\prime}} {\bm y}(z^\prime) 
	+ n \lambda {\bm s}_{\cA_{z^\prime}}(z^\prime) = 0.
	\label{eq:proof_eq2_en} 
\end{align}
By substracting (\ref{eq:proof_eq1_en}) from (\ref{eq:proof_eq2_en}) and $\cA_z = \cA_{z^\prime}$, we have
\begin{align*}
	\hat{\bm \beta}_{\cA_z}(z^\prime) - \hat{\bm \beta}_{\cA_z}(z) 
	&=
	(X^\top_{\cA_z} X_{\cA_z} + n \delta I_{|\cA_z|})^{-1} X^\top_{\cA_z} ({\bm y}(z^\prime) - {\bm y}(z))\\
	&=(X^\top_{\cA_z} X_{\cA_z} + n \delta I_{|\cA_z|})^{-1} X^\top_{\cA_z} ({\bm a} + {\bm b} z^\prime - {\bm a} - {\bm b} z)\\
	&=(X^\top_{\cA_z} X_{\cA_z} + n \delta I_{|\cA_z|})^{-1} X^\top_{\cA_z} {\bm b} \times (z^\prime - z).
\end{align*}
Thus, we achieve Equation (\ref{eq:lemma1_eq1_en}).
Similarly, we can write the optimality conditions with $X_{\cA^c_z}$ for $z$ and $z^\prime$, and easily obtain Equation (\ref{eq:lemma1_eq2_en}).
\end{proof}
Now, we can see that $\hat{\bm \beta}_{\cA_z}(z)$ and ${\bm s}_{\cA^c_z}(z)$ are functions of $z$. Then, for a real value $z$, there exists $t_z$ such that for any real value $z^\prime$ in $[z, z + t_z)$, 
all elements of $\hat{\bm \beta}_{\cA_{z^\prime}}(z^\prime)$ remain the same signs with $\hat{\bm \beta}_{\cA_z}(z)$,
and
all elements of $\bm s_{\cA^c_{z^\prime}}(z^\prime)$ are strictly smaller than $1$ in absolute value.
The value of $t_z$ can be computed by Lemma \ref{lemma:transition_point} as in lasso case.

\subsubsection{Full Target Case} \label{ext:full_target}
In the full target case, as discussed in \cite{liu2018more}, the data is used to choose the interesting features but it is \emph{not} used for summarizing the relation between the response and the selected features.
Therefore, we can always use \emph{all} the features to define the direction of interest 
\[
	\bm \eta_j = X(X^\top X)^{-1} \bm e_j,
\]
where $\bm e_j \in \RR^p$ is a zero vector with one at its $j^{\rm th}$ coordinate.
The conditional inference is defined as
\begin{align} \label{eq:full_model_conditional_inference}
	\bm \eta_j^\top \bm {\bm Y} \mid \left \{ j \in \cA(\bm Y) , \bm q(\bm Y) = \bm q({\bm y}^{\rm obs}) \right \}.
\end{align}
In \cite{liu2018more}, the authors proposed a solution to conduct conditional inference for a specific case when $p < n$, and there is no solution for the case when $p > n$.
With the proposed parametric programming method, we can solve this problem.
We first re-write the conditional inference in (\ref{eq:full_model_conditional_inference}) as the problem of characterizing the sampling distribution of 
\begin{align} \label{eq:full_model_conditional_inference_parametric}
	Z \mid \{Z \in \cZ\} \text{ where } \cZ = \{z \in \RR \mid j \in \cA(\bm y(z))\}.
\end{align}
The $\bm y(z)$ in (\ref{eq:full_model_conditional_inference_parametric}) is defined as in (\ref{eq:parametrized_data_space}).
Then, to identify $\cZ$, we only need to obtain the path of Lasso solution $\hat{\bm{\beta}}(z)$ as we proposed in \S3, and simply check the intervals in which $j$ is an element of the active set corresponding to $\hat{\bm{\beta}}(z)$ along the path.
Finally, after having $\cZ$, we can easily compute the selective $p$-value or selective confidence interval.

\subsubsection{Stable Partial Target Case} \label{ext:partial_target}
In the stable partial target case, as discussed in \cite{liu2018more}, we only allow stable features to influence the formation of the test-statistic.
The stable features are those with very strong signals and we would not to miss out.
We will choose a set $\cH_{\rm obs}$ of stable features.
Then, for any $j \in \cH_{\rm obs}, j \in \cA_{\rm obs}$,
\[
	\bm \eta_j = X_{\cH_{\rm obs}} (X_{\cH_{\rm obs}}^\top  X_{\cH_{\rm obs}})^{-1} \bm e_j.
\]
And, for any $j \not \in \cH_{\rm obs}, j \in \cA_{\rm obs}$,
\[
	\bm \eta_j = X_{\cH_{\rm obs} \cup \{j\}} (X_{\cH_{\rm obs} \cup \{j\}}^\top  X_{\cH_{\rm obs} \cup \{j\}})^{-1} \bm e_j.
\]
We next show how to construct $\cH_{\rm obs}$ according to \cite{liu2018more}.


\paragraph{Stable target formation by setting higher value of $\lambda$ (TN-$\ell_1$).}
In this case, $\cH_{\rm obs}$ is the lasso active set but with a higher value of $\lambda$ than the one was used to select $\cA_{\rm obs}$.
We denote $\cH_{\rm obs} =  \cH({\bm y}^{\rm obs})$, the conditional inference is then defined as
\begin{align} \label{eq:stable_partial_model_l1_conditional_inference}
	\bm \eta_j^\top \bm {\bm Y} \mid \left \{ j \in \cA(\bm Y) ,  \cH (\bm Y) = \cH({\bm y}^{\rm obs}), \bm q(\bm Y) = \bm q({\bm y}^{\rm obs}) \right \}.
\end{align}
The main drawback of the method in \cite{liu2018more} is that they have to consider all $2^{|\cH_{\rm obs}|}$ sign vectors, which requires huge computation time when $|\cH_{\rm obs}|$ is large.
With our piecewise-linear homotopy computation, we can easily overcome this drawback.
We first re-write the conditional inference in (\ref{eq:stable_partial_model_l1_conditional_inference}) as the problem of characterizing the sampling distribution of 
\begin{align} 
	Z \mid \{Z \in \cZ\} \text{ where } \cZ = \{z \in \RR \mid j \in \cA(\bm y(z)), \cH(\bm y(z)) = \cH({\bm y}^{\rm obs})\}.
\end{align}
We now can easily identify $\cZ = \cZ_1 \cap \cZ_2$, where 
$\cZ_1 = \{z \in \RR \mid j \in \cA(\bm y(z))\}$ which is the same with full target case, 
and 
$\cZ_2 = \{z \in \RR \mid \cH(\bm y(z)) = \cH({\bm y}^{\rm obs})\}$ 
which we can simply obtain by using the proposed method in \S3 of the main paper.


\paragraph{Stable target formation by setting a cutoff value $c$ (TN-Custom).}
In this case, we choose $\cH_{\rm obs}$ by setting a cutoff value $c$ for choosing $\beta_j$ such that $|\beta_j| \geq c$  
\footnote{We note that our formulation is slightly different but more general than the one in \cite{liu2018more}.}. 
The set $\cH_{\rm obs}$ is defined as 
\begin{align*}
	\cH_{\rm obs} = \left \{ j \in \cA_{\rm obs}, |\beta_j|  \geq c \right\},
\end{align*}
where  $ \beta_j =  \bm e^\top_j (X_{\cA_{\rm obs}}^\top  X_{\cA_{\rm obs}})^{-1} X_{\cA_{\rm obs}}^\top  \bm y^{\rm obs} $.
We denote $\cH_{\rm obs} = \cH(\cA_{\rm obs}) \subset \cA_{\rm obs}$, the conditional inference is then formulated as 
\begin{align}\label{eq:stable_partial_model_custom_conditional_inference}
	\bm \eta_j^\top \bm Y \mid \left \{\cH (\cA(\bm Y)) = \cH (\cA_{\rm obs}),  \cA(\bm Y) = \cA_{\rm obs} \right \}.
\end{align}
The main drawback of the method in \cite{liu2018more} is that they still require conditioning on $\{\cA(\bm Y) = \cA_{\rm obs}\}$, which is computationally intractable when $|\cA_{\rm obs}|$ is large because the enumeration of $2^{|\cA_{\rm obs}|}$ sign vectors is required.
With our proposed method, we can easily overcome this drawback.

\subsubsection{Marginal Model} \label{ext:marginal_model}
In the case of marginal model, we can always decide a priori to investigate the marginal relationship between the column $j$ of feature matrix $X$ and the observed response vector $\bm y^{\rm obs}$ if $j$ is selected.
The conditional inference is defined as
\begin{align} \label{eq:marginal_model_conditional_inference}
	\bm \eta_j^\top \bm {\bm Y} \mid \left \{ j \in \cA(\bm Y) , \bm q(\bm Y) = \bm q({\bm y}^{\rm obs}) \right \},
\end{align}
where $\bm \eta_j = X_j (X_j^\top X_j)^{-1} \bm e_j$.
The solution for conducting this conditional inference is the same with the full target case.
The only difference between marginal model case and full target case is the formulation of $\bm \eta_j$.

\subsubsection{Interaction Model} \label{ext:interaction_model}
Firstly, we apply Lasso on $\{X, \bm y^{\rm obs}\}$ to obtain the active set $\cA_{\rm obs} = \cA(\bm y^{\rm obs})$.
Next, we construct a feature matrix for interaction model as 
\[X_{\rm inter} = (X_i X_j)_{i, j \in \cA_{\rm obs}, i < j} \in \RR^{n \times d},
\]
where $d = 0.5 |\cA_{\rm obs}| (|\cA_{\rm obs}| - 1)$.
Then, the Lasso optimization problem for the interaction model is given by
\begin{align*}
	\hat{{\bm \beta}} = \argmin \limits_{{\bm \beta} \in \RR^d} \frac{1}{2} \|{\bm y}^{\rm obs} - X_{\rm inter} {\bm \beta}\|^2_2 + \lambda \|{\bm \beta}\|_1.
\end{align*}
Let us denote $\cA_{\rm inter} = \cA_{\rm inter}(\bm y^{\rm obs})$ be the active set of the  interaction model with $\bm y^{\rm obs}$, the conditional inference on the $j^{\rm th}$ selected feature in $\cA_{\rm inter}$ is defined as 
\begin{align} \label{eq:interaction_model_conditional_inference}
	\bm \eta_j^\top \bm Y \mid \{j \in \cA_{\rm inter}(\bm Y), \cA(\bm Y) = \cA(\bm y^{\rm obs}), \bm q(\bm Y) = \bm q (\bm y^{\rm obs}) \},
\end{align}
where $\eta_j = X_{\rm inter} (X_{\rm inter}^\top X_{\rm inter})^{-1} \bm e_j$ in which $\bm e_j \in \RR^d$.
We note that $\cA_{\rm inter}(\bm Y)$ is different from $\cA(\bm Y)$ which is the active set when we apply Lasso on data $\{X, \bm Y\}$.
By restricting the response vector to a line as in (\ref{eq:parametrized_data_space}), 
the conditional inference in (\ref{eq:interaction_model_conditional_inference}) is re-defined as 
\begin{align*}
		Z \mid \{Z \in \cZ\} \text{ where } \cZ = \{z \in \RR \mid j \in \cA_{\rm inter} (\bm y(z)), \cA(\bm y(z)) = \cA(\bm y^{\rm obs})\}.
\end{align*}
From now on, the process of identifying $\cZ$ is straightforward which is based on the method we proposed in \S3 of the main paper and the extension for full target case in the Appendix.

\subsection{Additional Experiments.} \label{appendix:exp_details}
For the experiments, we executed the code on Intel(R) Xeon(R) CPU E5-2687W v4 @ 3.00GHz.


\paragraph{Efficiency of the proposed method.}
We checked the computation time of our extension for elastic net when applying on synthetic data. 
The results are shown in Figure \ref{fig:no_polytopes_cc_elastic_net}.
\begin{figure}[H]
\centering
\includegraphics[width=0.45\linewidth]{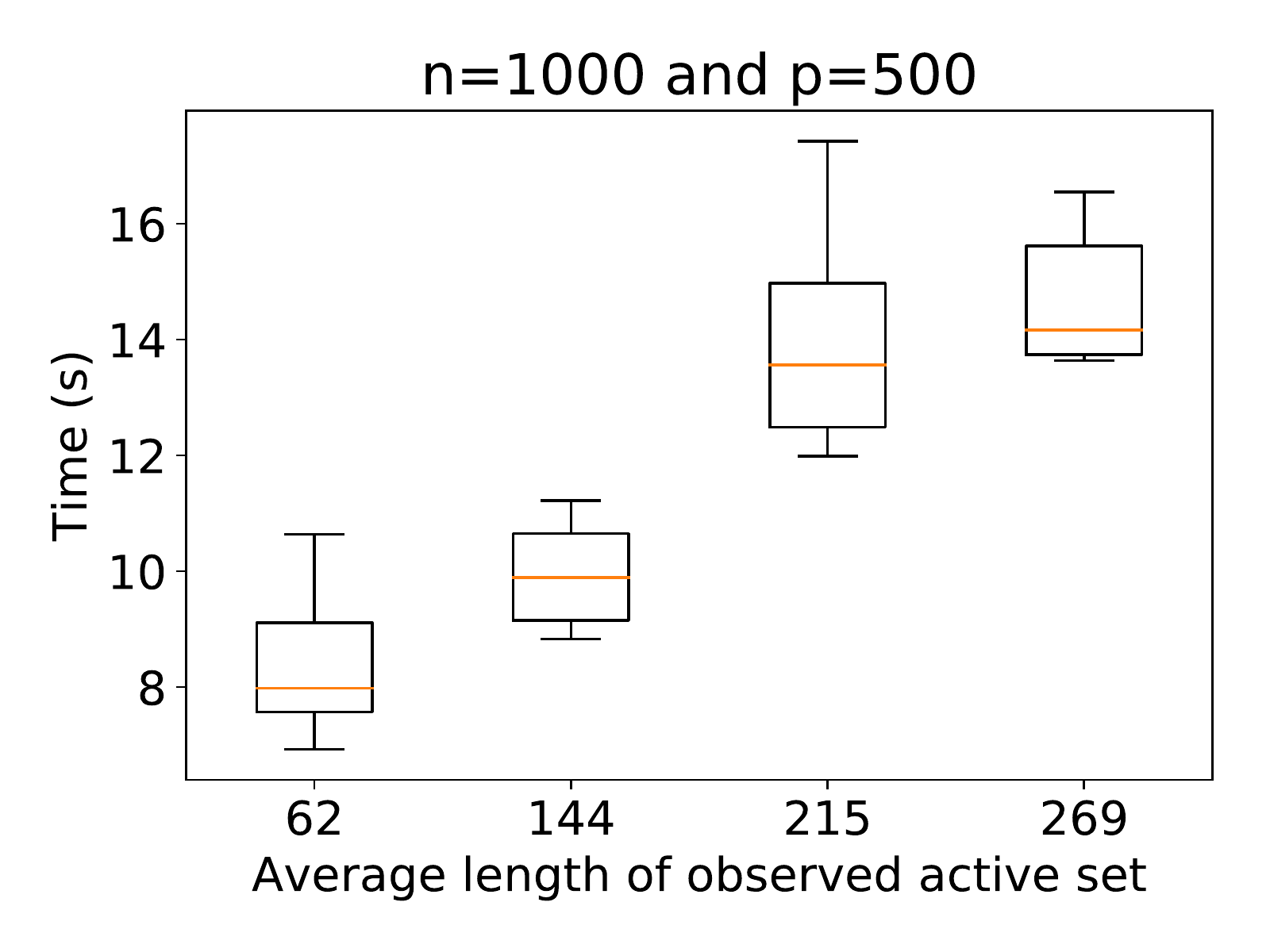}  
\caption{Computation time of our proposed method in elastic net case.}
\label{fig:no_polytopes_cc_elastic_net}
\end{figure}
%
%

\paragraph{The robustness of the proposed method in terms of the FPR control.}
We applied our proposed method to the case when the data follows Laplace distribution, skew normal distribution (skewness coefficient 10), and $t_{20}$ distribution. 
We also conducted experiments when $\sigma^2$ is also estimated from the data.
We generated $n$ outcomes as $y_i = \bm x_i^\top \bm \beta + \veps_i$, 
$i = 1, ..., n$, 
where 
$p = 5, \bm x_i \sim \NN(0, I_p)$, 
and $\veps_i$ follows Laplace distribution, skew normal distribution, or $t_{20}$ distribution with zero mean and standard deviation was set to 1.
In the case of estimated $\sigma^2$, $\veps_i \sim \NN(0, 1)$.
We set all elements of $\bm \beta$ to 0, and set $\lambda = 0.5$.
For each case, we ran 1,200 trials for each $n \in \{100, 200, 300, 400\}$. 
The FPR results are shown in Figure \ref{fig:fig_robustness}.

\begin{figure}[!t]
\begin{subfigure}{.5\textwidth}
  \centering
  \includegraphics[width=0.81\linewidth]{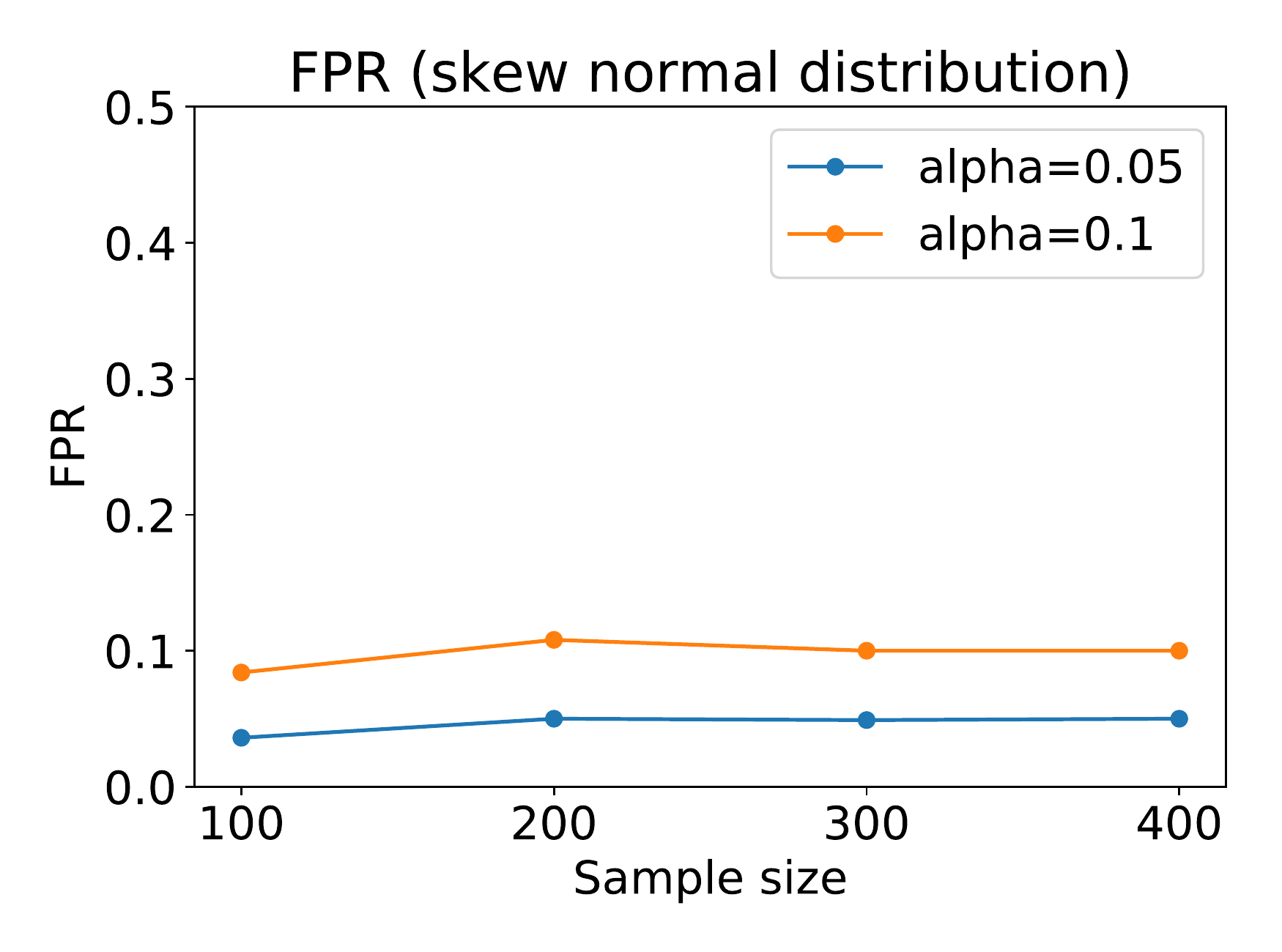}  
\end{subfigure}
\begin{subfigure}{.5\textwidth}
  \centering
  \includegraphics[width=.81\linewidth]{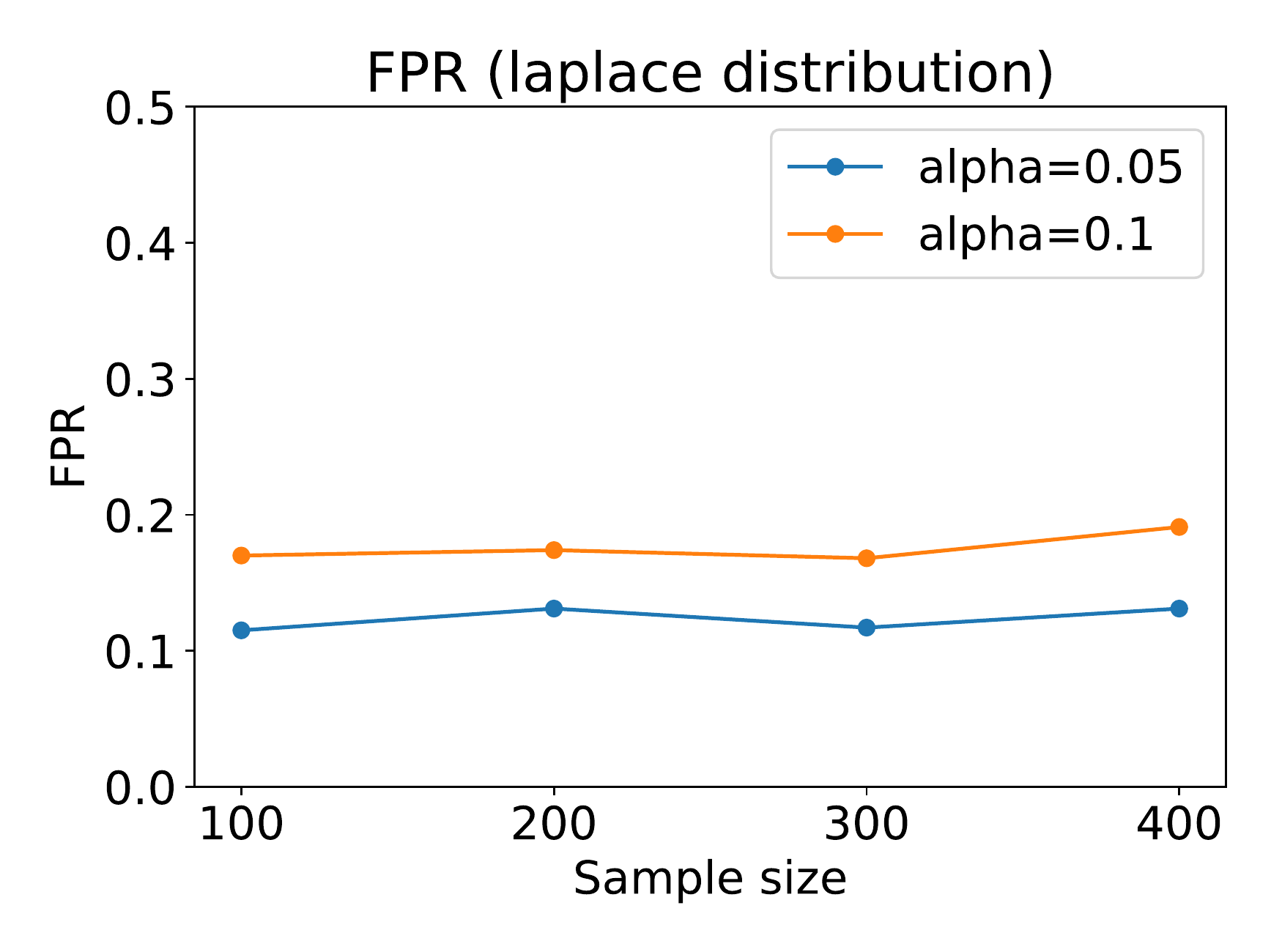}  
\end{subfigure}
\begin{subfigure}{.5\textwidth}
  \centering
  \includegraphics[width=.81\linewidth]{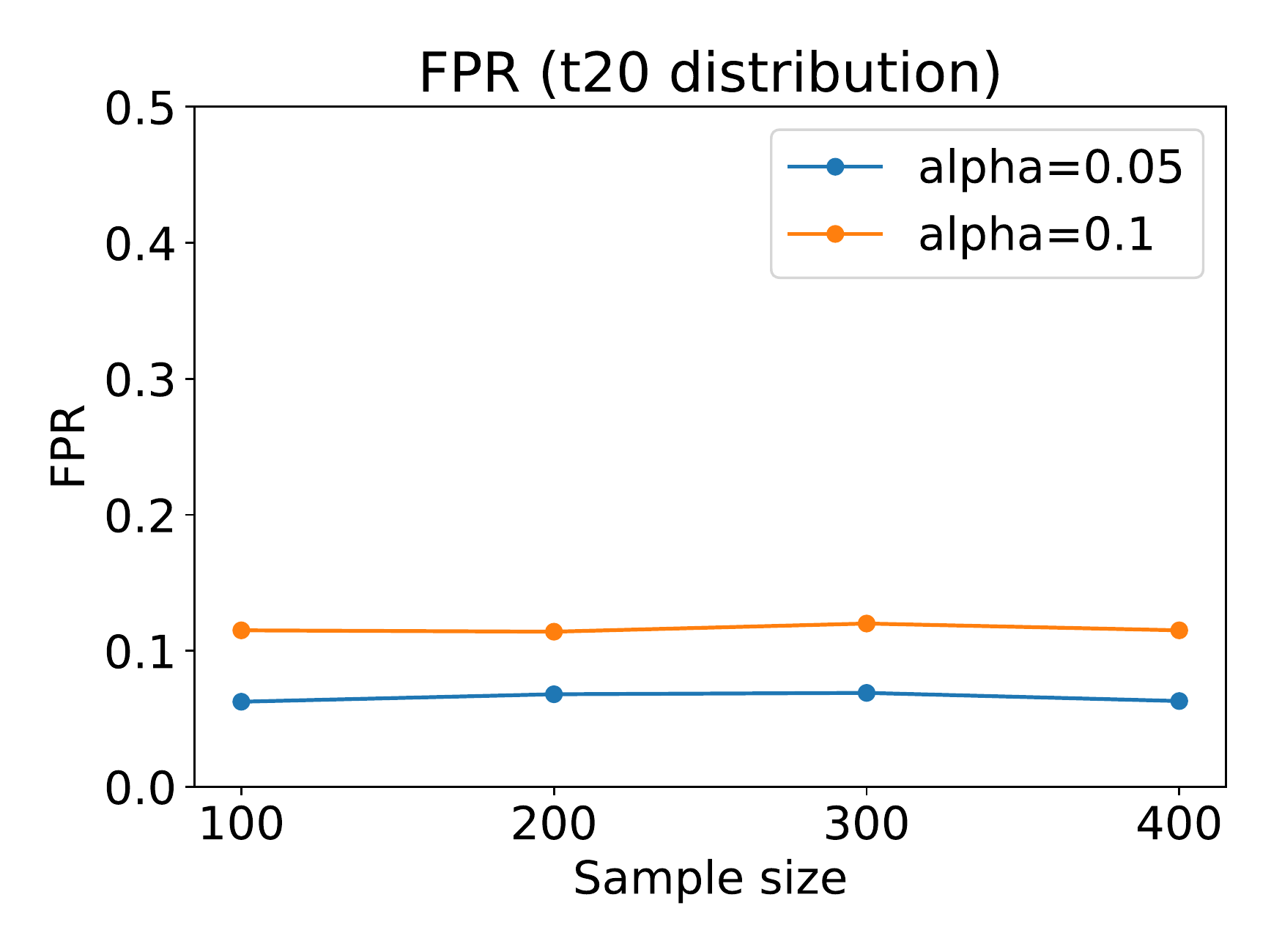}  
\end{subfigure}
\begin{subfigure}{.5\textwidth}
  \centering
  \includegraphics[width=.81\linewidth]{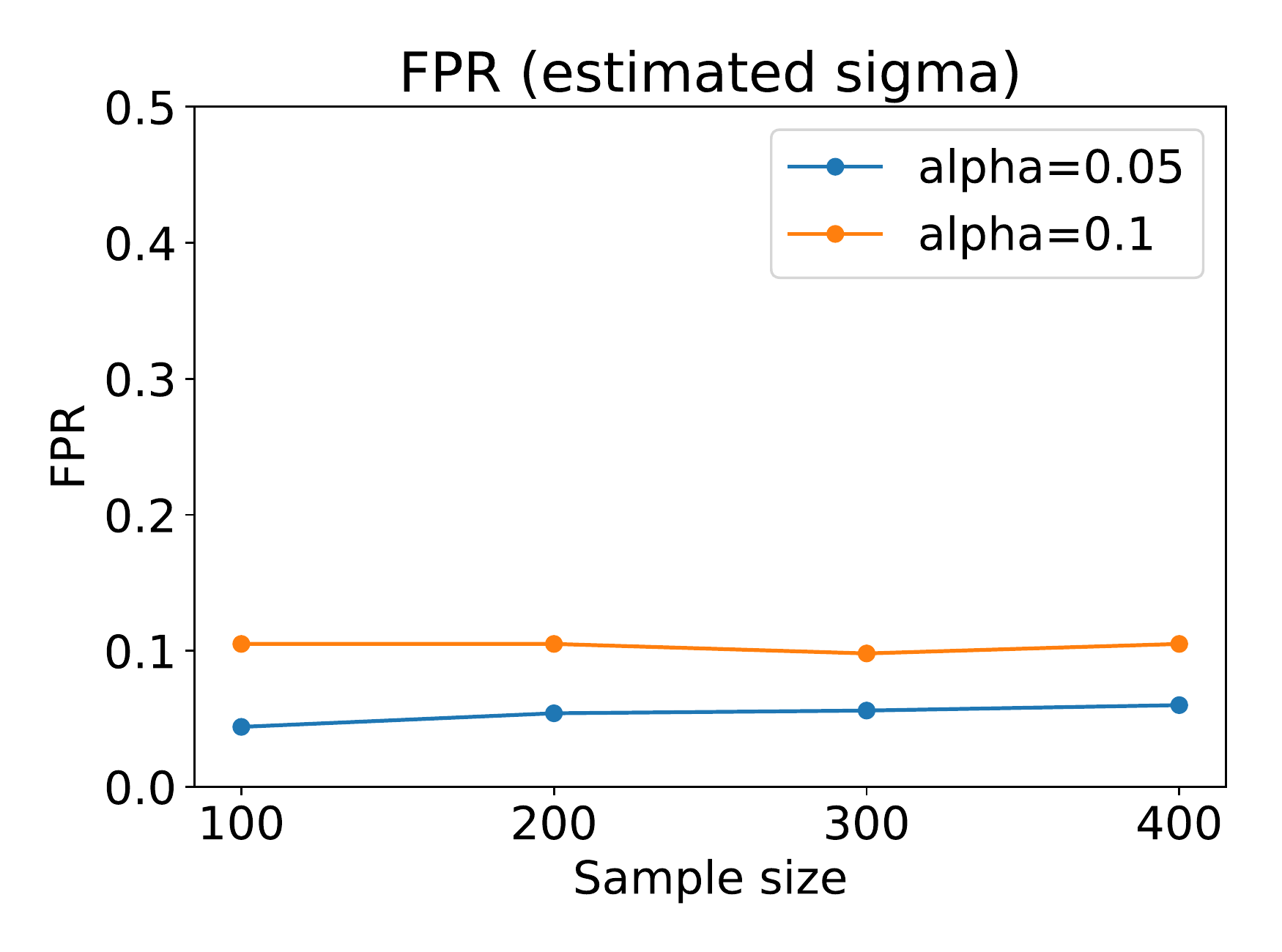}  
\end{subfigure}
\caption{The robustness of the proposed method in terms of the FPR control.}
\label{fig:fig_robustness}
\end{figure}


\begin{figure}[!t]
\begin{subfigure}{.245\textwidth}
  \centering
  \includegraphics[width=\linewidth]{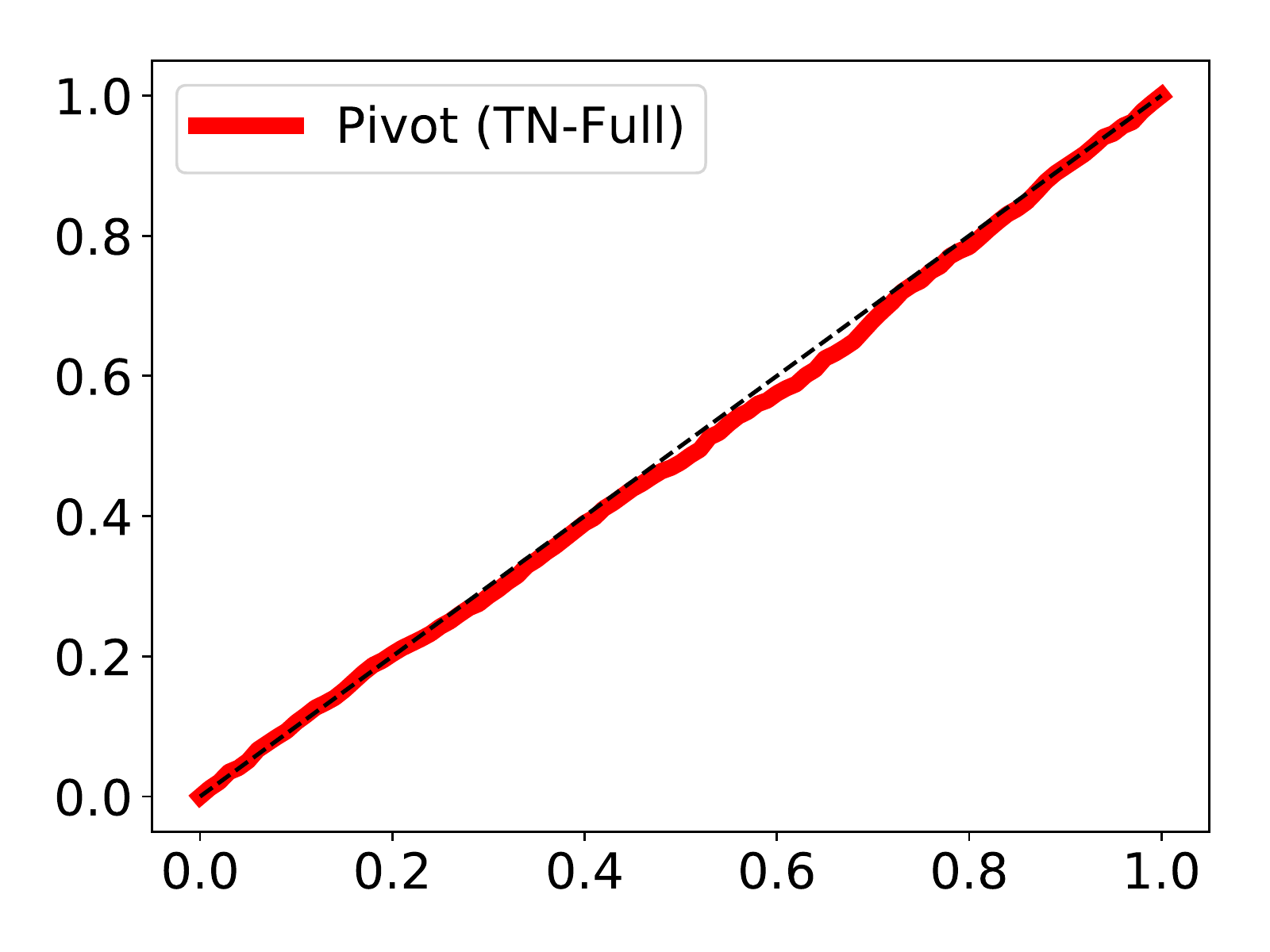}  
  \caption{TN-Full}
\end{subfigure}
\begin{subfigure}{.245\textwidth}
  \centering
  \includegraphics[width=\linewidth]{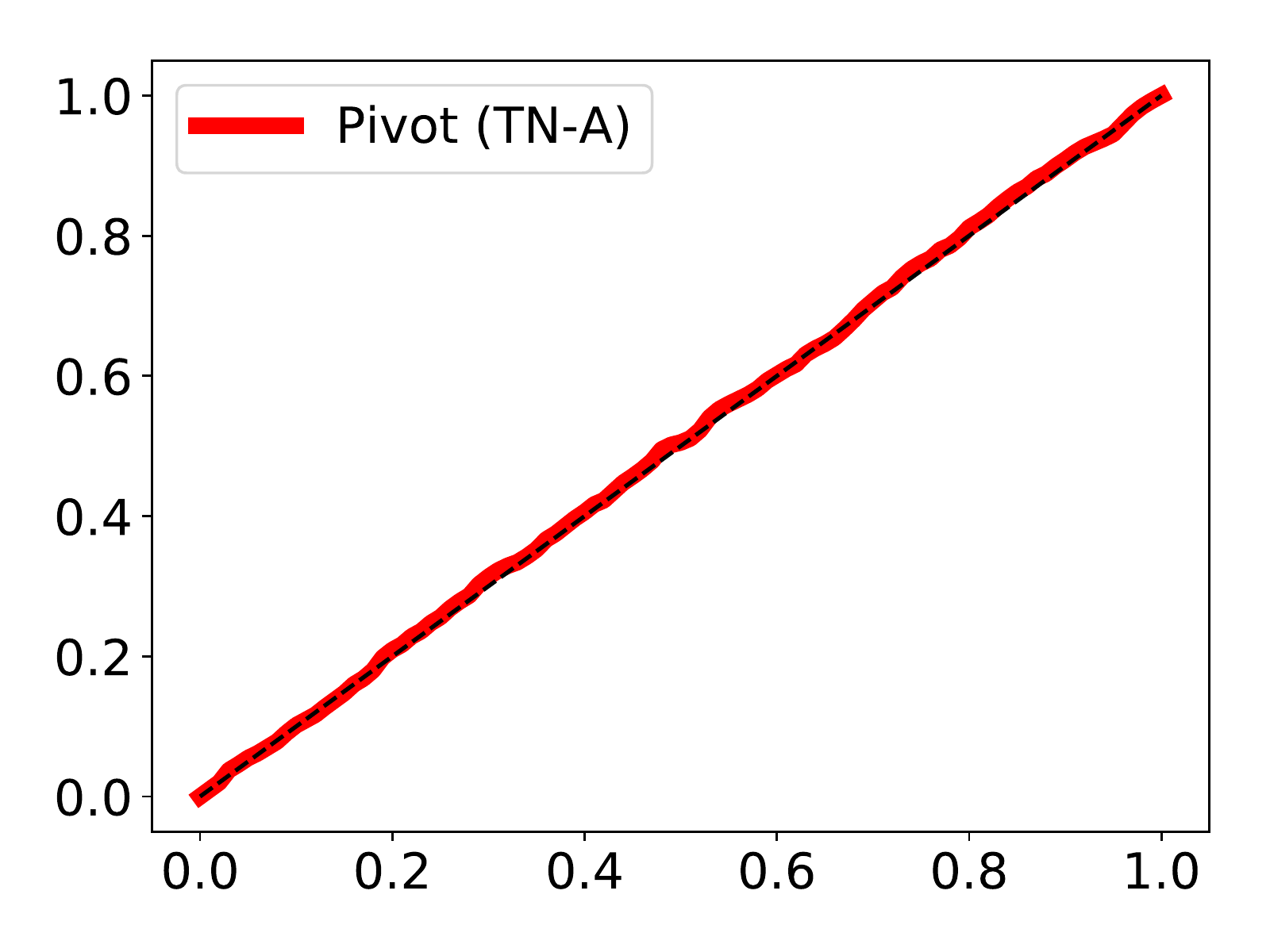}  
  \caption{TN-A}
\end{subfigure}
\begin{subfigure}{.245\textwidth}
  \centering
  \includegraphics[width=\linewidth]{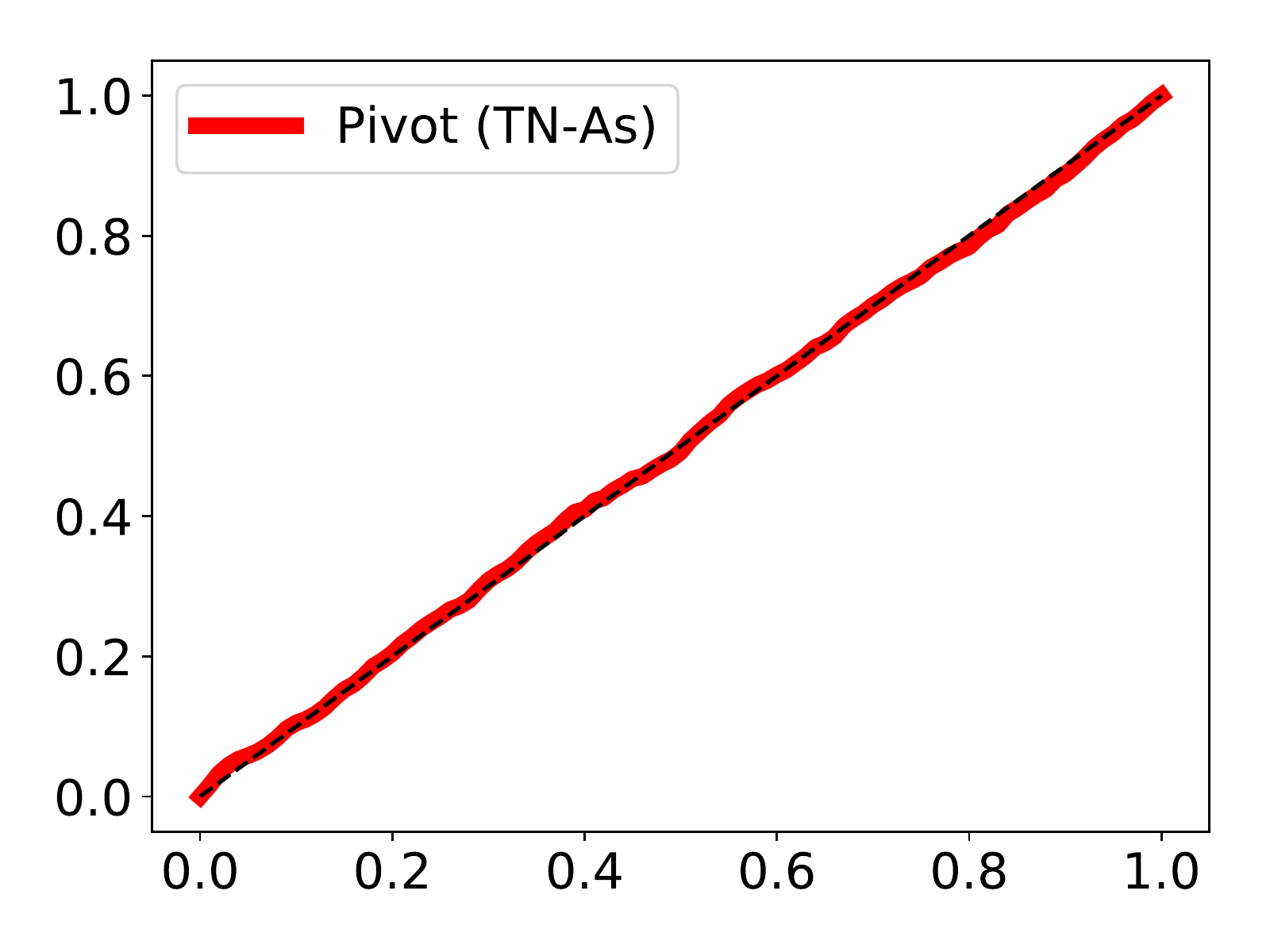}  
  \caption{TN-As}
\end{subfigure}
\begin{subfigure}{.245\textwidth}
  \centering
  \includegraphics[width=\linewidth]{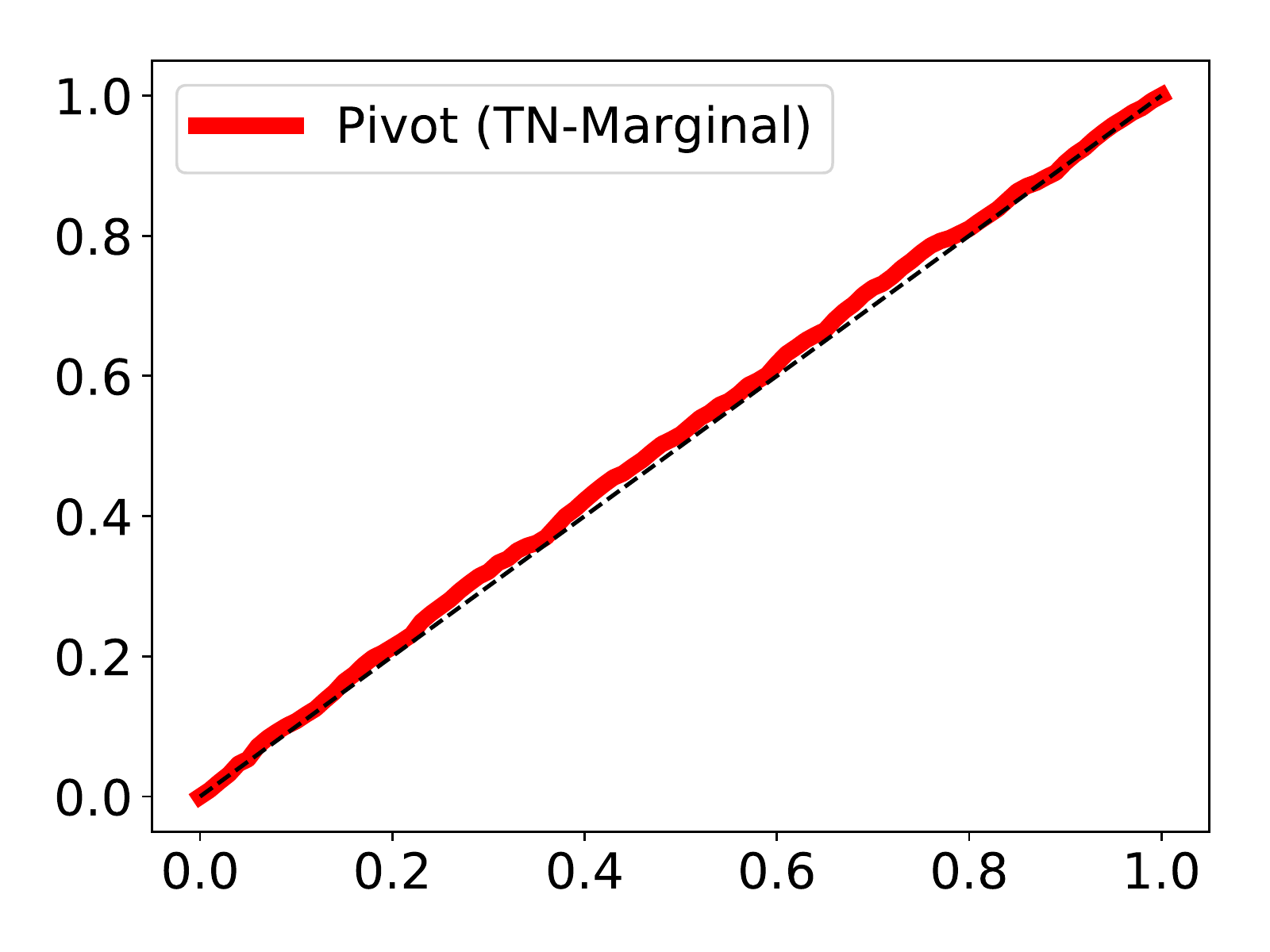}  
  \caption{TN-Marginal}
\end{subfigure}
\begin{subfigure}{.245\textwidth}
  \centering
  \includegraphics[width=\linewidth]{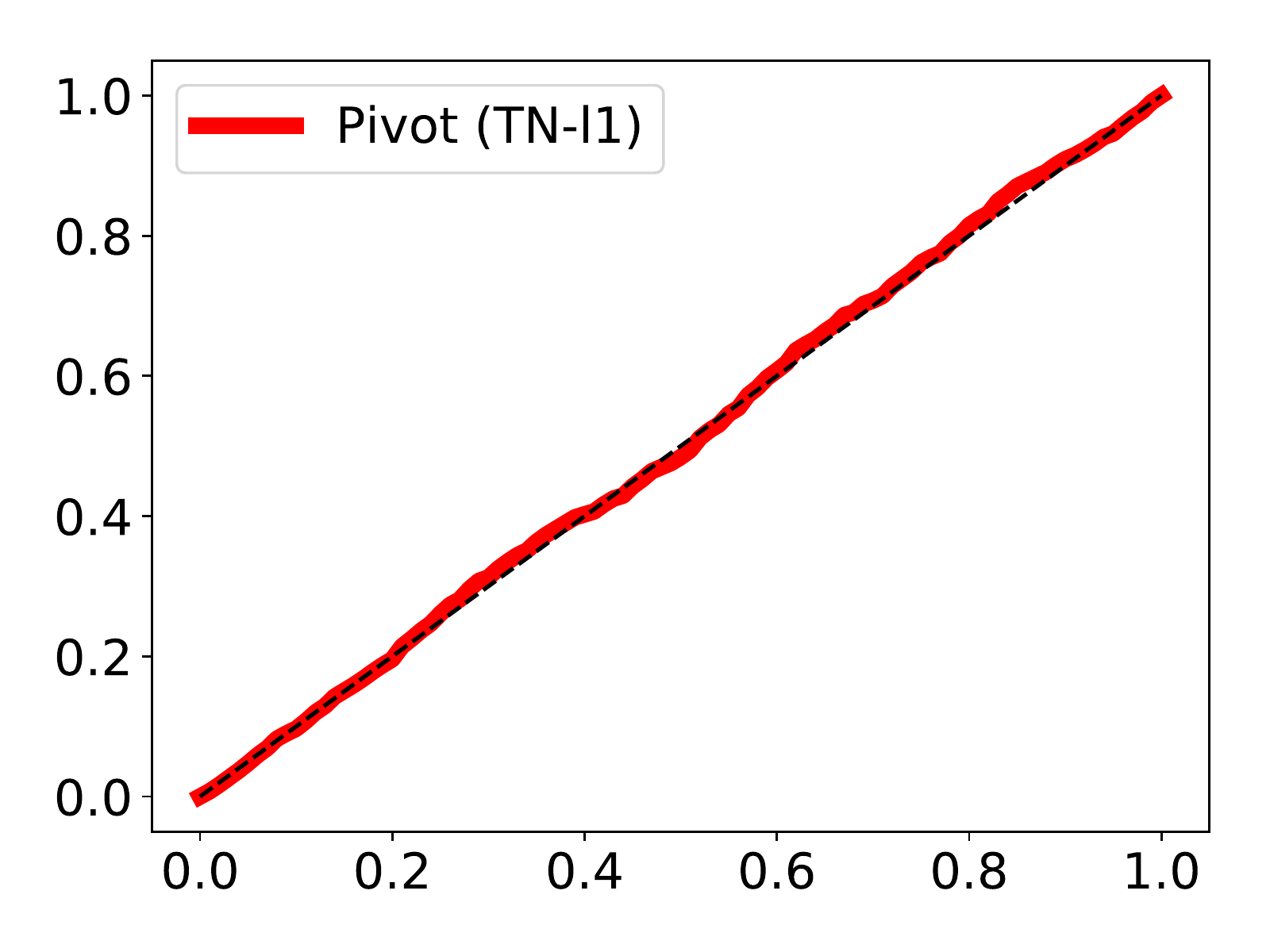}  
  \caption{TN-$\ell_1$}
\end{subfigure}
\begin{subfigure}{.245\textwidth}
  \centering
  \includegraphics[width=\linewidth]{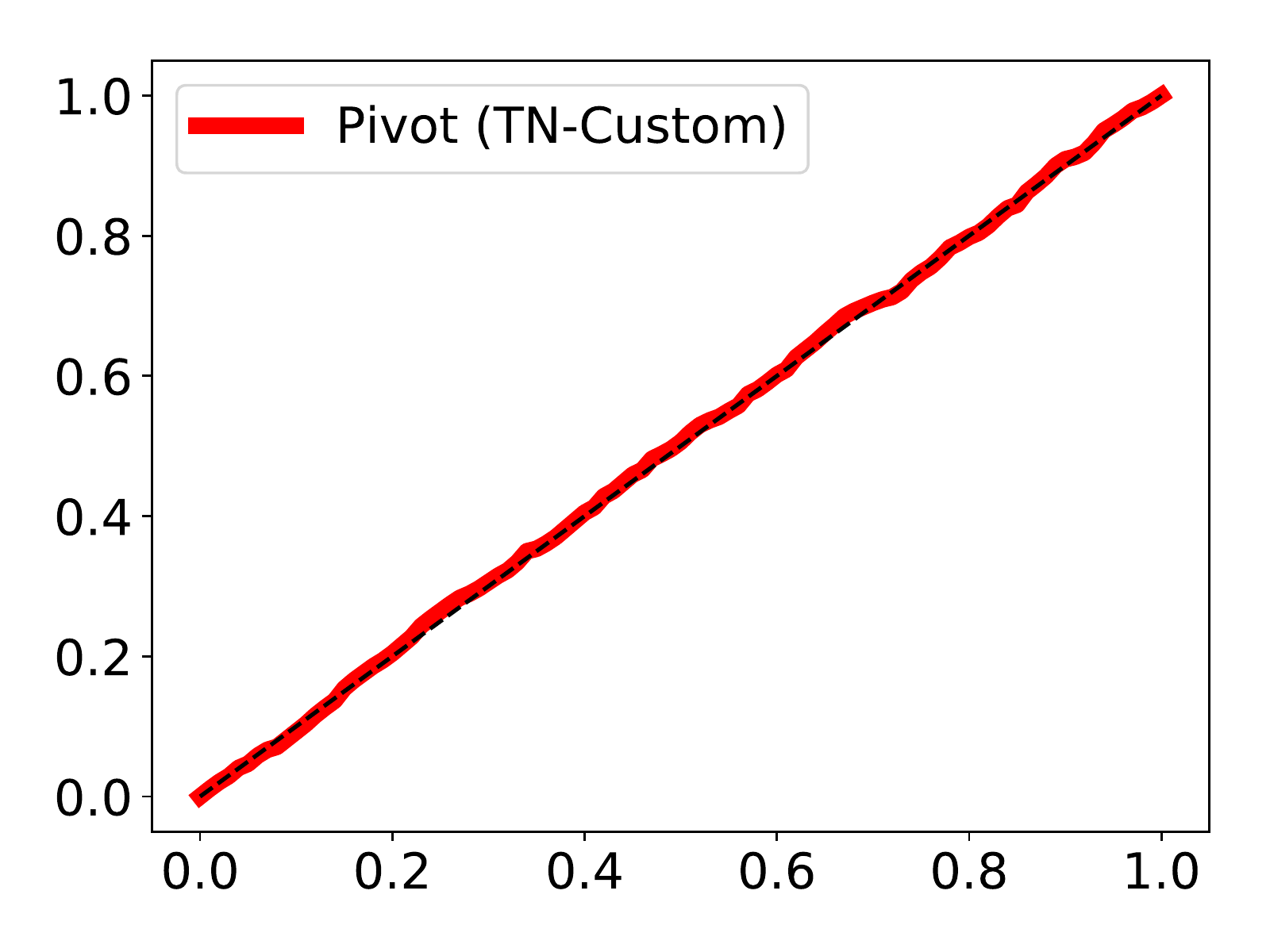}  
  \caption{TN-Custom}
\end{subfigure}
\begin{subfigure}{.245\textwidth}
  \centering
  \includegraphics[width=\linewidth]{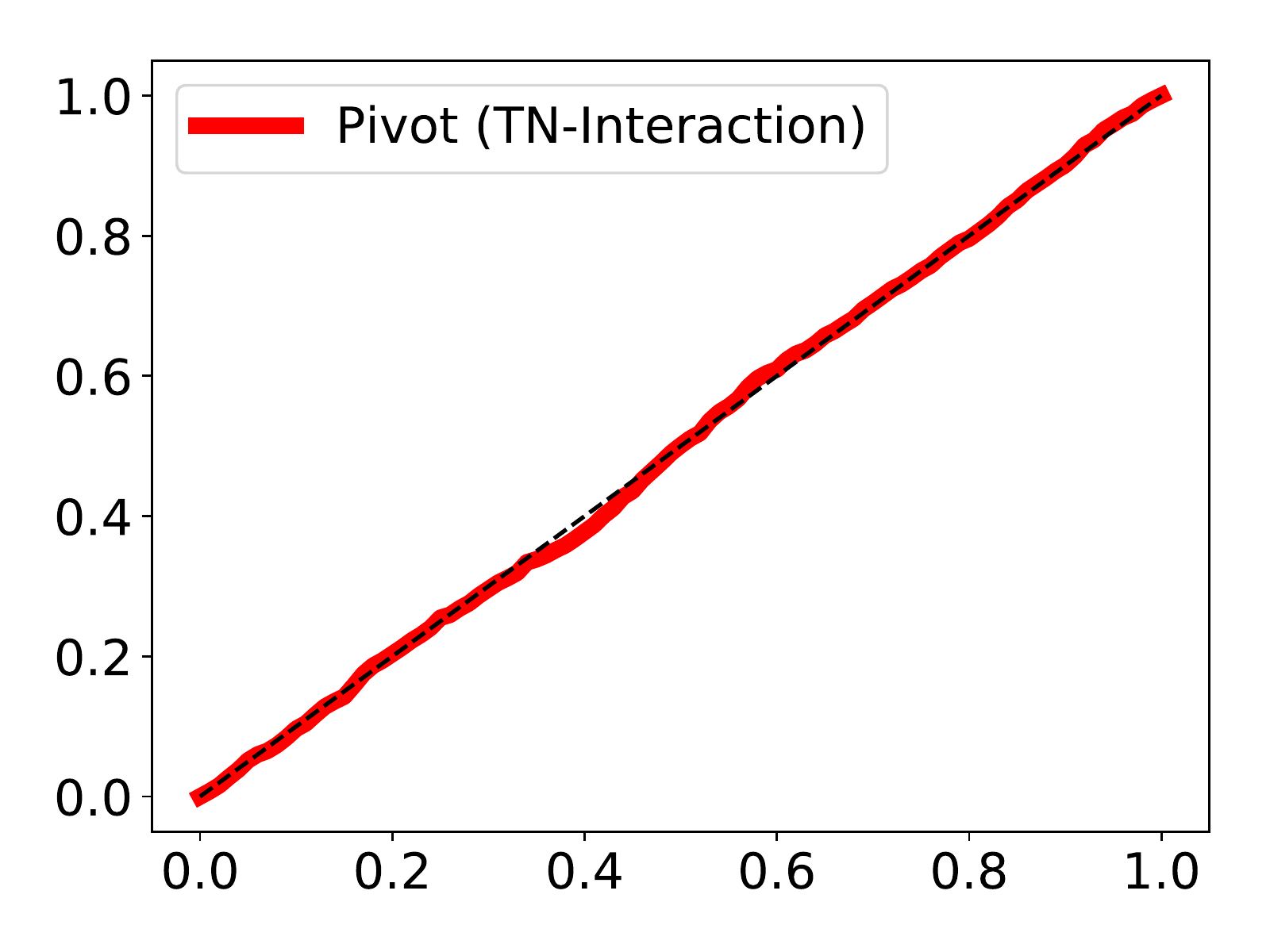}  
  \caption{TN-Interaction}
\end{subfigure}
\begin{subfigure}{.245\textwidth}
  \centering
  \includegraphics[width=\linewidth]{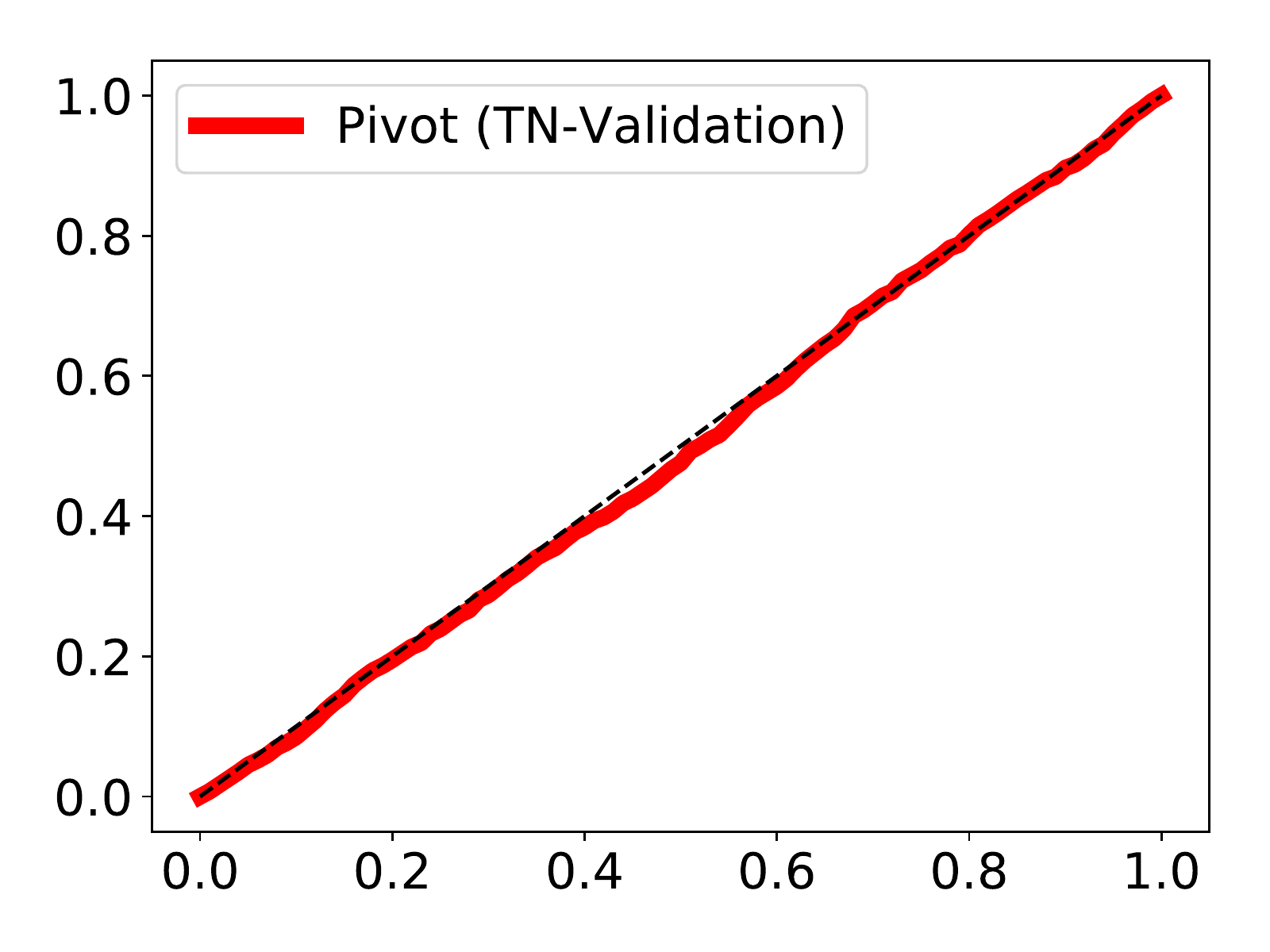}  
  \caption{TN-Validation}
\end{subfigure}
\caption{Uniform QQ-plot of the pivotal quantity.}
\label{fig:fig_uniform_qq_plot}
\end{figure}

\paragraph{Uniformity verification of the pivotal quantity.}
We generated $n = 100$ outcomes as $y_i = \bm x_i^\top \bm \beta + \veps_i$, 
$i = 1, ..., n$, 
where 
$p = 5, \bm x_i \sim \NN(0, I_p)$, 
and $\veps_i \sim \NN(0, 1)$.
We set the first two elements of $\bm \beta$ to 2, and set $\lambda = 5$.
We applied our method and ran 1,200 trials for each case of conditioning: TN-Full, TN-A, TN-As, TN-Marginal (marginal model), TN-$\ell_1$, TN-Custom, TN-Interaction (interaction model), and TN-Validation (considering validation selection event). 
For stable partial target formation, to identify $\cH_{\rm obs}$, we set the value of higher $\lambda$ to 15 in the case of TN-$\ell_1$, 
and cutoff value $c$ is set to 1 in the case of TN-Custom.
We set $\Lambda=\{2^{-1}, 2^0, 2^1\}$ and performed 5-fold cross-validation in the case of TN-Validation.
The results are shown in Figure \ref{fig:fig_uniform_qq_plot}.

\vfill

\end{document}